\definecolor{codegreen}{rgb}{0,0.6,0}
\definecolor{codegray}{rgb}{0.5,0.5,0.5}
\definecolor{codepurple}{rgb}{0.58,0,0.82}
\definecolor{backcolour}{rgb}{0.95,0.95,0.92}
\newtheorem{proposition}{Proposition}
\title{Activated LoRA: Fine-tuned LLMs for Intrinsics }
\author{Kristjan Greenewald, Luis Lastras, Thomas Parnell, Vraj Shah, Lucian Popa, Giulio Zizzo,\\\textbf{Chulaka Gunasekara, Ambrish Rawat, David Cox}\\IBM Research}
\begin{document}

\maketitle
\begin{abstract}
    Low-Rank Adaptation (LoRA) has emerged as a highly efficient framework for finetuning the weights of large foundation models, and has become the go-to method for data-driven customization of LLMs. Despite the promise of highly customized behaviors and capabilities, switching between relevant LoRAs in a multiturn setting is inefficient, as the key-value (KV) cache of the entire turn history must be recomputed with the LoRA weights before generation can begin. To address this problem, we propose Activated LoRA (aLoRA), an adapter architecture which modifies the LoRA framework to only adapt weights for the tokens in the sequence \emph{after} the aLoRA is invoked. This change crucially allows aLoRA to accept the base model's KV cache of the input string, meaning that aLoRA can be instantly activated whenever needed in a chain without recomputing the prior keys and values. This enables building what we call \emph{intrinsics}, i.e. specialized models invoked to perform well-defined operations on portions of an input chain or conversation that otherwise uses the base model by default.  
    We train a set of aLoRA-based intrinsics models, demonstrating competitive accuracy with standard LoRA while significantly improving inference efficiency. We contributed our Activated LoRA implementation to the Huggingface PEFT library.\footnote{\url{https://github.com/huggingface/peft}} 
\end{abstract}
\section{Introduction}

The rapid adoption of large language models (LLMs) has catalyzed significant advancements in natural language processing tasks, from text generation to knowledge extraction. However, adapting these models to specific tasks or domains often demands finetuning their immense parameter space, a process that is computationally expensive and difficult to scale. Low-Rank Adaptation (LoRA)\citep{hu2021lora} has addressed these challenges by introducing a parameter-efficient method for fine-tuning \citep{houlsby2019parameter}, enabling highly customized model behavior without the need to retrain or modify the entire model. By optimizing a small subset of low-rank matrices, LoRA has emerged as a widely-used lightweight and effective alternative for task-specific customization \cite{peft}, particularly for large foundation models such as LLMs, and popular LoRA finetuning services are offered by both corporate and open-source LLM providers \citep{openai_finetuning_api,together_ai,predibase}. While large models perhaps have less need for finetuning, small models continue to see strong benefits from LoRA adapters \cite{wang2025tina,danilevsky2025RAGintrinsics}.
\looseness=-1

Motivated by this, significant work has been done to enable highly efficient serving of and inference with LoRAs, e.g. \cite{sheng2023s,bruel2024compress}, with vLLM  \cite{kwon2023efficient} incorporating and further optimizing many of these algorithms in its popular, state-of-the-art inference engine.
Yet, while LoRA excels in static or single-task scenarios, in modern applications a wide mix of skills is typically needed, e.g. in multiturn interactions and agentic applications \cite{wang2024openhands,yang2024swe}. Crucially, LoRAs often must be carefully tuned to avoid forgetting/degradation of performance on the wide variety of tasks the base model already performs well on. In applications where dynamic switching between multiple specialized skills would be advantageous (e.g. agentic or reasoning pipelines), this strategy breaks down. Additionally, multi-task LoRA training is significantly more difficult than training a LoRA for a single task, and this approach would be inherently non-modular, requiring retraining to incorporate further abilities later on.

In an ideal world, we would like to be able to seamlessly switch---within the same chat/agentic pipeline---between arbitrary sets of pretrained LoRA configurations for specialized tasks, while keeping the base model as-is for most interactions in the sequence. Unfortunately, the LoRA architecture is not well-suited to this regime, creating major inference inefficiencies.
Specifically, for each switch between LoRA adapters, it becomes necessary to recompute the representation of all context tokens prior to generation with the LoRA. In the case of the popular attention mechanism introduced by \cite{vaswani2017attention}, such a representation takes the form of the key-value (KV) cache of such prior tokens\footnote{In this work, we use the term ``(KV) cache'' to denote the set of saved keys and values for prior tokens in the LLM context, noting that our approach is not limited to attention types that use keys and values. We use ``(KV) cache reuse'' to mean (re)use of stored keys and values from an LLM call (e.g. base model generation) in a subsequent LLM call (e.g. the aLoRA generation) whose context shares a prefix with the original context and/or generation.}, but more generally, it is a problem that reoccurs even in different architectures; for example, in a state-space model, the state  representation of the tokens prior to generation would need to be recomputed if the matrices $A$ and $B$ are updated by a low rank correction. This recalculation introduces significant latency, GPU memory, and computational overhead that all scale with the length of the context that must be prefilled, especially as LLMs increasingly work with documents, files, or reasoning/agentic chains in the many (sometimes hundreds of) thousands of tokens \cite{liu2025can,liu2024deepseek,liu2023lost} (even {millions} for software engineering \cite{jiang2025puttingcontextsimplifyingagents}). This limits LoRA's usability in scenarios where rapid transitions between specialized behaviors are essential. 


In this work, we address this shortcoming, presenting our Activated LoRA method, which \emph{activates} adapted weights only on tokens corresponding to a short intrinsic instruction and subsequent generation, leaving the weights for other context tokens unchanged.
We elaborate on the setting we envision here. Within the discipline of software engineering, \emph{intrinsics} are generally useful functions that are built into a programming language whose implementation can be optimized by a compiler. We define an LLM intrinsic to be a capability that can be invoked through a well defined API that is reasonably stable and independent across model generations and families of how the intrinsic is implemented. Performance metrics such as accuracy, latency, and throughput may vary significantly across such implementations.


The concept of activated LoRA takes an opinionated view on how such differentiated performance could be attained. As inspiration, we note that instructions in LLMs can appear in many places in a prompt;  Figure \ref{fig:int} illustrates an example of two such places: an ``early prompting'' case where the instruction precedes the content, or a ``late prompting'' case where the content precedes the instruction. In the latter, the instruction does not have to be revealed ahead of time, making the representation of the content (KV cache) potentially reusable for or from other tasks. We note that these paradigms can take on many shapes; for example, prefix tuning methods such as \cite{li2021prefix} explicitly use an early prompting framework, tuning this early prompt. Recently, a set of real-world tasks (and trained adapters) conforming to the late-prompting intrinsics regime was presented in \cite{danilevsky2025RAGintrinsics}, in the context of building performant and robust RAG \cite{lewis2020retrieval} pipelines. We believe that a vast array of intrinsics-style finetuning tasks can be created with applications throughout the agentic, chat, and reasoning spaces.

A LoRA adapter, like the prefix tuning example above, shares the downsides of the early prompting concept; the LoRA-adapted LLM's internal representation of the content (KV cache) is fit-for-purpose, specific to the LoRA adapter and not reusable by or from the base model or other adapters. Note that while late-prompt tuning approaches do exist and may be viable in some cases, they tend to underperform prefix tuning (early-prompt tuning), which in turn significantly underperforms LoRA-based training \cite{he2021towards}. As a result, a trainable late-prompting style adapter is needed to close this gap. 


\begin{figure}[htb]
    \centering
\begin{tikzpicture}
	\matrix[row sep=1mm, column sep=1mm]
	{tm 
     \node[minimum width=35mm] {early prompting}; &  \node [span_tokens_purple,minimum width=20mm] {instruction};   &   \node [span_tokens_blue,minimum width=55mm] {content}; &   \node [span_tokens_purple] {answer}; \\
    };
    \end{tikzpicture}
    \begin{tikzpicture}
	\matrix[row sep=1mm, column sep=1mm]
	{tm 
     \node[minimum width=35mm] {late prompting}; & \node [span_tokens_blue,minimum width=55mm] {content}; &  \node [span_tokens_purple,minimum width=20mm] {instruction};   &    &   \node [span_tokens_purple] {answer}; \\
    };
    \end{tikzpicture}
    \caption{Late vs. early prompting framework for intrinsics. The aLoRA adapter architecture is designed to preserve the cache-reuse benefits of \emph{late prompting} by adapting weights only on the red tokens, allowing it to reuse the base model cache for the blue input tokens.}
    \label{fig:int}
    \end{figure}

We therefore introduce Activated LoRA (aLoRA), a novel extension of the LoRA framework designed to only adapt the model’s weights for tokens encountered \emph{after} activation, allowing base model KV cache for prior context to be reused (Figure \ref{fig:loras}). By decoupling the adaptation process from the need to recompute the input’s representation, aLoRA facilitates instantaneous switching between specialized models (intrinsics) while maintaining seamless interaction with the base model. This innovation not only reduces computational costs, but also unlocks new possibilities for deploying highly modular, task-specific behaviors within complex workflows.

In our experiments, we demonstrate significant speedups for aLoRA vs LoRA on the state-of-the-art inference engine vLLM \cite{zhu2023vllm}, and show that aLoRA adapters do not lose accuracy versus LoRA on a collection of benchmark finetuning tasks and a collection of real-world ``intrinsics'' tasks from \cite{danilevsky2025RAGintrinsics}.

\textbf{Relationship to other LoRA variants.} QLoRA \cite{dettmers2023qlora} proposes quantizing the weights in the base model while keeping the adapter higher precision in training and inference, which dramatically reduces overall memory costs. This can be directly applied to Activated LoRA as well\footnote{Implementation already supported in the Huggingface PEFT library.}, with KV cache reuse still possible if the base model inference also uses the same or sufficiently similar quantization. Experiments with this are beyond the scope of the present work. DoRA \cite{liu2024dora} adapters are similar to LoRA with a different decomposed low rank weight matrix, with magnitude and direction of the adaptation parameterized separately. In principle, extension to DoRA-style adapters (``Activated DoRA'') is immediate along the same principles as aLoRA. That said, inference with DoRA is less efficient than LoRA, so it is typically recommended \cite{peft} that DoRA adapters be merged into the base model weights --- a procedure that is not possible for our ``activated'' approach due to the selective application of the adapter weights.

\begin{figure}[htb]
    \centering
\begin{tikzpicture}
	\matrix[row sep=1mm, column sep=1mm]
	{tm 
     \node {(a)}; &  \node [span_tokens_blue,minimum width=50mm] {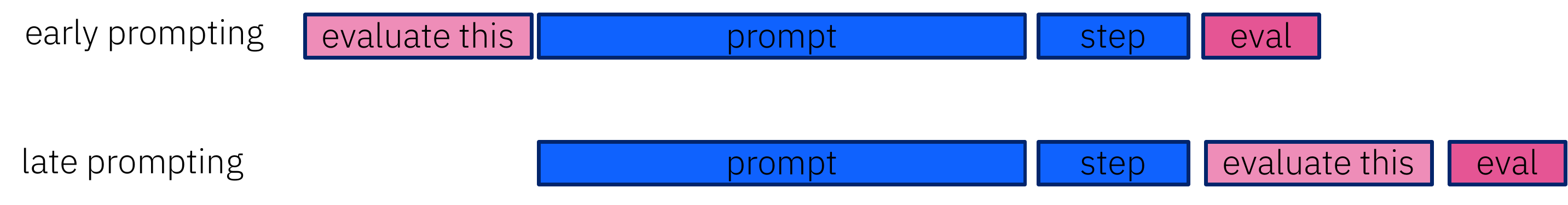};   &   \node [span_tokens_green,minimum width=25mm] {answer \tiny \strut}; &   \node [span_tokens_purple] {eval$_1$};  &   \node [span_tokens_purple] {eval$_2$};\\
      &  \node [span_cache_blue,minimum width=50mm] {prefill (base)};    &   \node [span_cache_green,minimum width=25mm] {\small gen (base)};  \\
      &  \node [span_cache_purple,minimum width=50mm] {prefill (LoRA$_1$)};    &   \node [span_cache_purple,minimum width=25mm] { prefill (LoRA$_1$) }; &   \node [span_cache_purple] {gen (LoRA$_1$)};&     \\
      &  \node [span_cache_purple,minimum width=50mm] {prefill (LoRA$_2$)};    &   \node [span_cache_purple,minimum width=25mm] { prefill (LoRA$_2$) }; &   &   \node [span_cache_purple] {gen (LoRA$_2$)};   \\ \\ \\ \\ \\ \\ \\ \\ \\
      \node {(b)}; &  \node [span_tokens_blue,minimum width=50mm] {prompt};   &   \node [span_tokens_green,minimum width=25mm] {answer \tiny \strut}; &   \node [span_tokens_purple] {eval$_1$};  &   \node [span_tokens_purple] {eval$_2$};\\
      &  \node [span_cache_blue,minimum width=50mm] {prefill (base)};    &   \node [span_cache_green,minimum width=25mm] {\small gen (base)}; &   \node [span_cache_purple] {gen (aLoRA$_1$)};&   \node [span_cache_purple] {gen (aLoRA$_2$)};   \\
    };
\end{tikzpicture}
    \caption{Computation and memory pattern of (a) LoRA vs. (b) aLoRA used as evaluators of an answer given by a base model. (1) \texttt{prompt} is input to the base model, which generates \texttt{answer}, (2) \texttt{prompt} + \texttt{answer} is input to both intrinsics in parallel, which generate \texttt{eval\_1} and \texttt{eval\_2} respectively. Narrow rectangles denote tokens and wide rectangles denote the KV cache.  }
    \label{fig:loras}
\end{figure}

\subsection{Background}
Modern decoder-only causal LLMs are transformers containing a sequence of decoder layers, each of which typically contain MLP layers that operate on each token's representation independently, and a causal attention mechanism that allows tokens to attend to representations of prior tokens in the same layer (see Figure \ref{fig:llm} in the appendix).
We review the ubiquitous softmax attention mechanism introduced by \cite{vaswani2017attention} and its connection to the KV cache, as well as LoRA adapters.\footnote{Extension of this to other forms of attention is straightforward, but we leave this treatment to future work.} 
Further, note that LoRA and aLoRA adapters can be applied to MLP blocks as well, but since it is typical practice to not adapt the MLP blocks we focus the presentation on the attention blocks only.

\textbf{Attention:} Recall that the softmax attention mechanism in each attention layer takes the form
\begin{equation}
\text{Attention}(Q, K, V) = \text{softmax}\left(\frac{QK^\top}{\sqrt{d_k}}\right)V,
\label{eq:att}
\end{equation}
where $d_k$ is the dimension and $Q, K, V$ are concatenated queries, keys, and values for the tokens:
\begin{equation}
Q = XW^Q, \quad K = XW^K, \quad V = XW^V
\label{eq:weights}
\end{equation}
where $W^Q$, $W^K$, and $W^V$ are weight matrices.

\textbf{LLM inference and the KV cache:} LLMs, using a causal attention mask \cite{radford2019language}, generate tokens autoregressively one at a time. 
Suppose we have already generated tokens $1,\dots,t-1$ with corresponding hidden‐state embeddings $x_1,\dots,x_{t-1}$.  Whenever these tokens were processed (either during prefill or generation) their keys and values were already computed:
\begin{align*}
K_{1:t-1} &= [\,K_1;\dots;K_{t-1}\,], V_{1:t-1} = [\,V_1;\dots;V_{t-1}\,],
\quad K_i = x_i W^K,V_i = x_i W^V.
\end{align*}
Due to causal attention, tokens can only attend to prior tokens, and hence are not affected by subsequent generations and can be stored for use by future tokens. At generation step $t$, we therefore only need to compute the new query, key, and value for the latest token
\begin{equation}
Q_t = x_t W^Q,\quad
K_t = x_t W^K,\quad
V_t = x_t W^V,
\end{equation}
and then form the full cached matrices by appending
$
K_{1:t} = \bigl[K_{1:t-1};\,K_t\bigr], 
V_{1:t} = \bigl[V_{1:t-1};\,V_t\bigr].
$
Finally, the output hidden state for token $t$ is ($d_k$ is hidden dimension)
\begin{equation}
h_t
= \mathrm{Attention}\bigl(Q_t,\;K_{1:t},\;V_{1:t}\bigr)
= \mathrm{softmax}\!\Bigl(\tfrac{Q_t\,K_{1:t}^\top}{\sqrt{d_k}}\Bigr)\;V_{1:t}.
\end{equation}
Because the entire matrix of past keys $K_{1:t-1}$ and values $V_{1:t-1}$ is precomputed and stored in the ``KV cache,'' at each new step $t$ we only pay the cost of  
computing one new row for each of $Q$, $K$, and $V$, and  
performing a single‐row softmax and weighted sum over the remaining cached $K$ and $V$.  
As shown in \cite{vaswani2017attention,radford2019language}, this reduces what would have been an $\mathcal O(t^2 d_k)$ full‐matrix multiply down to only $\mathcal O(t\,d_k)$ operations at each step, yielding massive speedups.


\textbf{LoRA:} LoRA adapts the attention weights $W^Q$, $W^K$, and $W^V$ by replacing them with  $W^Q + \Delta^Q$, $W^K + \Delta^K$, and $W^V + \Delta^V$, where $\Delta^Q, \Delta^K, \Delta^V$ are rank $r$ matrices. This yields 
\begin{equation}
Q = X(W^Q + \Delta^Q), \quad K = X(W^K + \Delta^K), \quad V = X(W^V + \Delta^V).
\label{eq:weightsL}
\end{equation}
This lowers the number of parameters, making finetuning significantly more efficient \citep{hu2021lora}. If the LoRA is active for the entire chain, then the KV cache-based inference applies and generation is efficient. If, on the other hand, any part of the input was generated or prefilled by the base model or another LoRA, then $K$ and $V$ for the LoRA are different than the corresponding $K$ and $V$ for the base model, hence any existing base model KV cache cannot be used by the LoRA, meaning the KV cache for the entire (potentially very long) input must be recomputed. To clarify, this involves passing the input through all layers of the transformer in sequence--not just the adapted attention layers--since modifications to attention blocks modify the inputs to all downstream layers and tokens.



\section{Activated LoRA}
Just as in LoRA, our aLoRA architecture adapts the attention weights $W^Q$, $W^K$, and $W^V$ by replacing them with  $W^Q + \Delta^Q$, $W^K + \Delta^K$, and $W^V + \Delta^V$, where $\Delta^Q, \Delta^K, \Delta^V$ are rank $r$ matrices. The difference lies in how these adapted weights are used. We assume that the default generation model for the chat is the base model, and that intrinsics only operate on these base model generations. As a result, we can assume that the base model has precomputed a KV cache for the input context (the blue region in Figure \ref{fig:int}).

The aLoRA architecture is designed to \textbf{match the base model keys and values on context tokens}, allowing the adapter to reuse base model KV cache for those tokens, or, vice versa, the base model to reuse KV cache that the aLoRA adapter creates for its context inputs. Specifically, in the attention mechanism \eqref{eq:att}, aLoRA only adapts the $Q,K,V$ matrices for tokens occurring after the start of the invocation sequence. Let the token index for adapter activation be $t_{\mathrm{invoke}}$, and let the current token have index $t \geq t_{\mathrm{invoke}}$. Instead of \eqref{eq:weightsL} we then have
\begin{equation}
Q\!=\! \!\left[\!\!\begin{array}{c}X_{1:t_{\mathrm{invoke}}-1}W^Q\\ X_{t_{\mathrm{invoke}}:t}(W^Q \!+\! \Delta^Q)\end{array}\!\!\right],  K\! =\!\! \left[\!\!\begin{array}{c}X_{1:t_{\mathrm{invoke}} -1}W^K\\ X_{t_{\mathrm{invoke}}:t}(W^K \!+\! \Delta^K)\end{array}\!\!\right],  V\! \!=\! \left[\!\!\begin{array}{c}X_{1:t_{\mathrm{invoke}}-1}W^V\\ X_{t_{\mathrm{invoke}}:t}(W^V\! +\! \Delta^V)\end{array}\!\!\right]
\label{eq:weights2}
\end{equation}
where $X_{1:t_{\mathrm{invoke}}-1}$ and $X_{t_{\mathrm{invoke}}:t}$ are the portions of $X$ coming before and after the aLoRA model is invoked. If $X_{1:t_{\mathrm{invoke}}-1}$ is associated with tokens generated or prefilled by the base model, then $X_{1:t_{\mathrm{invoke}}-1}W^K$ and $X_{1:t_{\mathrm{invoke}-1}}W^V$ are already in the KV cache and do not need to be recomputed. Similarly, any tokens that have been prefilled or generated by the aLoRA model have keys and values processed by the adapted weights, so $X_{t_{\mathrm{invoke}}:t}(W^K + \Delta^K)$ and $X_{t_{\mathrm{invoke}}:t}(W^V + \Delta^V)$ are also available (except for the current token being generated). As a result, the aLoRA architecture can seamlessly reuse the existing base model KV cache as well as continue to maintain its own KV cache as it generates. Note that adaptations for the MLP blocks that preserve this KV cache property can be done via directly corresponding equations as for the $Q, K, V$ blocks above.

We can formalize this cache reuse claim as follows (proved in the appendix):
\begin{proposition}[KV equivalence and aLoRA inference]
\label{prop:1}
For the causal decoder-only transformers we consider, the keys and values (actually, all internal states) prior to $t_{\mathrm{invoke}}$ are identical for the base model and any aLoRA adapter model using \eqref{eq:weights2}. Specifically, $K^{\mathrm{base}}_{1:t_{\mathrm{invoke}}-1} = K^{\mathrm{adapter}}_{1:t_{\mathrm{invoke}}-1}$, and $V^{\mathrm{base}}_{1:t_{\mathrm{invoke}}-1} = V^{\mathrm{adapter}}_{1:t_{\mathrm{invoke}}-1}$. Inference with the aLoRA adapted model can be done causally one token at a time (by simply increasing $t$ in \eqref{eq:weights2} iteratively) with KV cache reuse.
\end{proposition}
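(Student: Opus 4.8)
The plan is to prove the KV-equivalence claim by induction on the decoder-layer index $\ell$, with the causal attention mask as the essential ingredient, and then to read off the inference claim as a corollary. Write $X^{(\ell)}_i$ for the hidden state of token $i$ at the input to layer $\ell$, and let superscripts $\mathrm{base}$ and $\mathrm{adapter}$ distinguish the two models. The inductive hypothesis I would carry is: for every token index $i < t_{\mathrm{invoke}}$, the states agree, i.e. $X^{(\ell),\mathrm{base}}_i = X^{(\ell),\mathrm{adapter}}_i$. The base case at the embedding layer ($\ell = 0$) is immediate, since the token and positional embeddings do not involve the adapter parameters $\Delta^Q, \Delta^K, \Delta^V$ and are therefore identical for every token, in particular for $i < t_{\mathrm{invoke}}$.

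For the inductive step, fix a layer $\ell$ and a token $i < t_{\mathrm{invoke}}$, assuming the hypothesis at the input to layer $\ell$. By \eqref{eq:weights2}, every token with index strictly less than $t_{\mathrm{invoke}}$ is processed with the unadapted weights, so $Q_i = X^{(\ell)}_i W^Q$ and, for all $j \le i$, $K_j = X^{(\ell)}_j W^K$ and $V_j = X^{(\ell)}_j W^V$; by the inductive hypothesis each of these agrees across the two models. The crux is then the causal mask: it forces token $i$ to attend only to tokens $j \le i < t_{\mathrm{invoke}}$, so its attention output depends exclusively on the already-matched quantities $Q_i, K_{1:i}, V_{1:i}$ and never on any token at or beyond $t_{\mathrm{invoke}}$, whose keys and values do carry the $\Delta$ corrections. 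The ensuing residual connections, any normalization, and the MLP act tokenwise on this matched attention output (and are themselves either unadapted or adapted by the same index rule), so the layer-$\ell$ output for token $i$ agrees; this is exactly the input to layer $\ell+1$, closing the induction. Specializing to the keys and values at each layer yields $K^{\mathrm{base}}_{1:t_{\mathrm{invoke}}-1} = K^{\mathrm{adapter}}_{1:t_{\mathrm{invoke}}-1}$ and $V^{\mathrm{base}}_{1:t_{\mathrm{invoke}}-1} = V^{\mathrm{adapter}}_{1:t_{\mathrm{invoke}}-1}$, and more generally the equality of all internal states before $t_{\mathrm{invoke}}$.

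For the inference claim, I would combine this equivalence with the standard KV-cache recursion reviewed in the Background. Since the base-model cache for tokens $1,\dots,t_{\mathrm{invoke}}-1$ coincides with what the adapter would itself compute, it can be loaded directly; then for each step $t \ge t_{\mathrm{invoke}}$ the adapter computes a single new triple $Q_t, K_t, V_t$ via the adapted weights, appends $K_t, V_t$ to the running cache, and forms $h_t = \mathrm{softmax}(Q_t K_{1:t}^\top / \sqrt{d_k}) V_{1:t}$ over the concatenation of the reused base keys/values and the adapter-generated ones. Causality guarantees that token $t$ requires nothing from tokens $> t$, so iteratively incrementing $t$ in \eqref{eq:weights2} is well-defined and produces exactly the outputs the full (non-cached) computation would give.

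The one step worth stating with care — and the only real obstacle — is the appeal to the causal mask in the inductive step: it is precisely what prevents a pre-invocation token from attending to an adapted token and thereby inheriting a $\Delta$ correction. Everything else is routine bookkeeping over the tokenwise and residual structure of each layer, and the inference statement follows mechanically once the equivalence is in hand.
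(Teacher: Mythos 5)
Your proof is correct and follows essentially the same route as the paper's: an induction on layer depth showing all internal states for tokens $i < t_{\mathrm{invoke}}$ coincide (the paper compresses this into the observation that subsequent layer outputs are deterministic functions of the $Q,K,V$ ``up to $i$''), followed by the standard KV-cache recursion for generation at $t \ge t_{\mathrm{invoke}}$. If anything, you are slightly more explicit than the paper in isolating the causal mask as the load-bearing step of the induction, which the paper leaves implicit.
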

Furthermore, we can quantify the computation and memory savings:
\begin{proposition}[aLoRA vs. LoRA inference costs]
\label{prop:2}
    Consider invoking an adapter with $T_{\mathrm{cache}}$ tokens of input for which a base model KV cache exists and $T_{\mathrm{new}} \ll T_{\mathrm{cache}}$ input tokens without cache.\footnote{We presume that $T_{\mathrm{new}}$ is the invocation sequence and following.} The first token generated by the aLoRA adapter requires $O(T_{\mathrm{cache}} T_{\mathrm{new}})$ operations, while LoRA requires $O((T_{\mathrm{cache}} +T_{\mathrm{new}} )^2)$. Furthermore, aLoRA must maintain only $O(T_{\mathrm{new}})$ additional KV cache memory, while LoRA requires additional $O(T_{\mathrm{cache}} + T_{\mathrm{new}})$. If $N$ distinct adapters share the first $T_{\mathrm{cache}}$ tokens as input, aLoRA costs become $O(NT_{\mathrm{cache}}T_{\mathrm{new}})$ and $O(NT_{\mathrm{new}})$, while LoRA has $O(N(T_{\mathrm{cache}} +T_{\mathrm{new}} )^2)$ and $O(N(T_{\mathrm{cache}} + T_{\mathrm{new}}))$.
\end{proposition}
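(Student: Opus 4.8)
The plan is to reduce the whole statement to two elementary cost primitives already justified in the Background: prefilling a block of $m$ fresh tokens whose queries attend to a context of $n$ total (cached plus fresh) keys and values costs $O(mn)$ operations — the dominant term being the $QK^\top$ score matrix of size $m\times n$, with the hidden dimension $d_k$ and the constant number of layers absorbed into the $O(\cdot)$ — while storing the keys and values for $m$ tokens costs $O(m)$ memory. I would state these as the operative cost model up front, after which the rest of the argument is bookkeeping. In particular, note that the attention term dominates the per-layer MLP and projection work (which is only linear in sequence length), so dropping those terms and the constant layer count is legitimate for long contexts.

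For the aLoRA side I would invoke Proposition \ref{prop:1}: the keys and values the adapter produces for the $T_{\mathrm{cache}}$ context tokens coincide, layer by layer, with those already stored by the base model, so they are reused at zero marginal cost. Consequently the adapter need only prefill the $T_{\mathrm{new}}$ invocation-and-prompt tokens, whose queries attend to at most $T_{\mathrm{cache}}+T_{\mathrm{new}}$ keys; summing the attention work over these tokens gives $O\!\left(T_{\mathrm{new}}(T_{\mathrm{cache}}+T_{\mathrm{new}})\right)$, which collapses to $O(T_{\mathrm{cache}}T_{\mathrm{new}})$ under the hypothesis $T_{\mathrm{new}}\ll T_{\mathrm{cache}}$. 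This prefill directly produces the logits at the final position, i.e. the first generated token, so no separate decode step need be counted. The only new keys and values the adapter must retain are those of the $T_{\mathrm{new}}$ tokens, yielding $O(T_{\mathrm{new}})$ additional memory.

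For the LoRA side I would appeal to the observation made in the Background discussion of LoRA: the adapted weights alter $K$ and $V$ for every token, so no base-model cache entry remains valid and the entire sequence of length $T_{\mathrm{cache}}+T_{\mathrm{new}}$ must be prefilled from scratch. Applying the prefill primitive with $m=n=T_{\mathrm{cache}}+T_{\mathrm{new}}$ gives $O\!\left((T_{\mathrm{cache}}+T_{\mathrm{new}})^2\right)$ operations and $O(T_{\mathrm{cache}}+T_{\mathrm{new}})$ of freshly stored KV memory. The $N$-adapter bounds then follow by linearity: the $T_{\mathrm{cache}}$-token base cache is computed once and shared across all $N$ aLoRA adapters, so their totals are exactly $N$ copies of the per-adapter new-token work, namely $O(NT_{\mathrm{cache}}T_{\mathrm{new}})$ compute and $O(NT_{\mathrm{new}})$ memory, whereas each LoRA must independently redo its full prefill and store its own full cache, giving $O\!\left(N(T_{\mathrm{cache}}+T_{\mathrm{new}})^2\right)$ and $O(N(T_{\mathrm{cache}}+T_{\mathrm{new}}))$.

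I do not expect a genuine mathematical obstacle, since the statement is a complexity accounting rather than a theorem with a hard core. The one point that demands care is the cost model itself, and I would be explicit about two things: that the reductions invoking $T_{\mathrm{new}}\ll T_{\mathrm{cache}}$ serve only to simplify $T_{\mathrm{cache}}+T_{\mathrm{new}}$ to $T_{\mathrm{cache}}$ in the leading terms (so the bounds should not be read as exact equalities), and that the legitimacy of the aLoRA savings rests entirely on Proposition \ref{prop:1} supplying the cached keys and values at \emph{every} layer, not merely at the attention inputs of the first layer.
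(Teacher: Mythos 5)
Your proposal is correct and follows essentially the same route as the paper's proof: both arguments reduce to the observation that aLoRA (via Proposition \ref{prop:1}) need only prefill the $T_{\mathrm{new}}$ fresh tokens against the $T_{\mathrm{cache}}+T_{\mathrm{new}}$ total keys at cost $O(T_{\mathrm{new}}(T_{\mathrm{cache}}+T_{\mathrm{new}}))=O(T_{\mathrm{cache}}T_{\mathrm{new}})$, whereas LoRA invalidates every cached key and value and must prefill the full context quadratically, with the $N$-adapter bounds following by linearity over a shared base cache. The only cosmetic difference is that you state a single prefill cost primitive up front while the paper itemizes the per-layer attention costs (new $Q,K,V$ rows, score matrix, masked softmax and weighted sum), and your explicit remarks---that the MLP work is only linear in sequence length and that Proposition \ref{prop:1} must supply cached keys and values at \emph{every} layer---are both points the paper's proof also makes.
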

As the context length and number of concurrent adapters rise, the computation and memory advantages of aLoRA scale linearly. Note that while methods exist for KV cache compression that can reduce overall memory footprint, e.g. \cite{li2024snapkv}, and various methods exist increasing the practical scalability of attention, KV cache reuse is still a major factor in practice as context length increases.


\textbf{Cache reuse patterns:} Importantly, while we 
typically emphasize the aLoRA adapter reusing cache from the base model, the fact that input cache is interchangeable between the base model and \emph{every} aLoRA adapter presents multiple opportunities for reuse. Consider three regimes: 
\begin{enumerate}
    \item The adapter reuses cache from the base model, e.g. checking a base model generation.
    \item The base model reuses cache created when the aLoRA adapter was invoked, e.g. when an adapter checks an input prior to sending to the base model for full generation.
    \item Multiple adapters reuse \emph{the same cache}, e.g. when checking base model outputs on multiple axes. This regime creates the most dramatic advantage for aLoRA over LoRA.
\end{enumerate}
In the results, we explore concrete intrinsics which follow these patterns.
Note that any KV cache created by aLoRA adapter for their own generations are \emph{not} reusable by other adapters or the base model, as these tokens come \emph{after} the adapter weights are turned on. If these strings needs to be input into other adapters or the base model, they should be prefilled again. This is a fairly minor concern, however, as intrinsics generations are often short, and per token, prefill is much faster than generation. 


\textbf{Invocation:} We found it useful to demarcate the activation of the aLoRA adapter via a short \emph{invocation token sequence}. Advantages are elaborated on in Appendix \ref{app:invoke} and include allowing more space for the adapted weights to process the input. Typically, this sequence will be appended to the prompt prior to generation, just as a ``generation prompt'' is typically appended to prompts for instruct-tuned model (e.g. \texttt{<|assistant|>} or some other string). While the aLoRA will often be invoked programmatically, this design in principle allows for the base model (or other intrinsics) to call the aLoRA model themselves (we do not explore this behavior in the current work).


\subsection{Training} 
The activated LoRA framework is explicitly designed for tuning the LLM to modify its output conditioned on an input. This regime matches (but is not limited to) instruction finetuning tasks, where the context tokens are excluded from the loss while finetuning the adapter to produce a specified output sequence (supervised finetuning, or SFT). aLoRA can in principle also be used in RL training pipelines, though we do not explore these in the current work. 

Our implementation of aLoRA is in the supplementary material. It seamlessly supports standard Huggingface training libraries \cite{wolf2019huggingface} as well as inference/generation methods (with or without using available base model KV cache) if needed (e.g. for testing or proofs of concept when the efficiencies of vLLM \citep{zhu2023vllm} are not needed). Later in this section, we will also show inference experiments using our modification of the more efficient (SOTA) inference package vLLM to support aLoRA.

For aLoRA SFT, training data is specified as a set of (possibly multiturn) inputs and completions, typically with the base model's chat template applied. An invocation token sequence for the aLoRA model is optionally specified as described in the previous section. This invocation sequence is appended to the input sequence, and the model is finetuned to produce the output given the input. 
To train the aLoRA adapter, a base model is specified, and following standard LoRA practice, we apply low-rank adapters to any (or all) of the query, key, and value blocks in the attention layers.\footnote{Adapting MLP layers is possible but less common in LoRA SFT.} In training, the aLoRA is aware of which tokens occur before the invocation sequence, and does not adapt the weights for those tokens (as in \eqref{eq:weights2}). 


\textbf{Adapter Model Capacity and Increased Rank:} Empirically, we observe that aLoRA adapters sometimes require higher rank than LoRA adapters to get good performance (no experiments in this paper use rank higher than $r=32$). We offer the following intuition for this. LoRA adapters are free to adapt the weights for the keys and values for tokens prior to activation, so they are able to ``compress'' information needed for generation into the low-rank signal captured by the adaptation. This signal can then be more easily ``picked up'' by the adapted query weights for generated tokens. 
See Appendix \ref{app:rank} for further exploration. 
This also provides the key motivation for starting the adapted weights at the intrinsic instruction tokens, rather than waiting to activate the weights only at generation time---this choice boosts model capacity.

We now compare aLoRA to standard LoRA both in terms of inference compute costs as well as generation quality and accuracy. Recall that our goal is to achieve significant gains in inference cost, while preserving (not losing) accuracy relative to what is achievable with LoRA.
\section{Inference timing results on vLLM}

\begin{figure}[htb]
    \centering
    \includegraphics[width=5.0in]{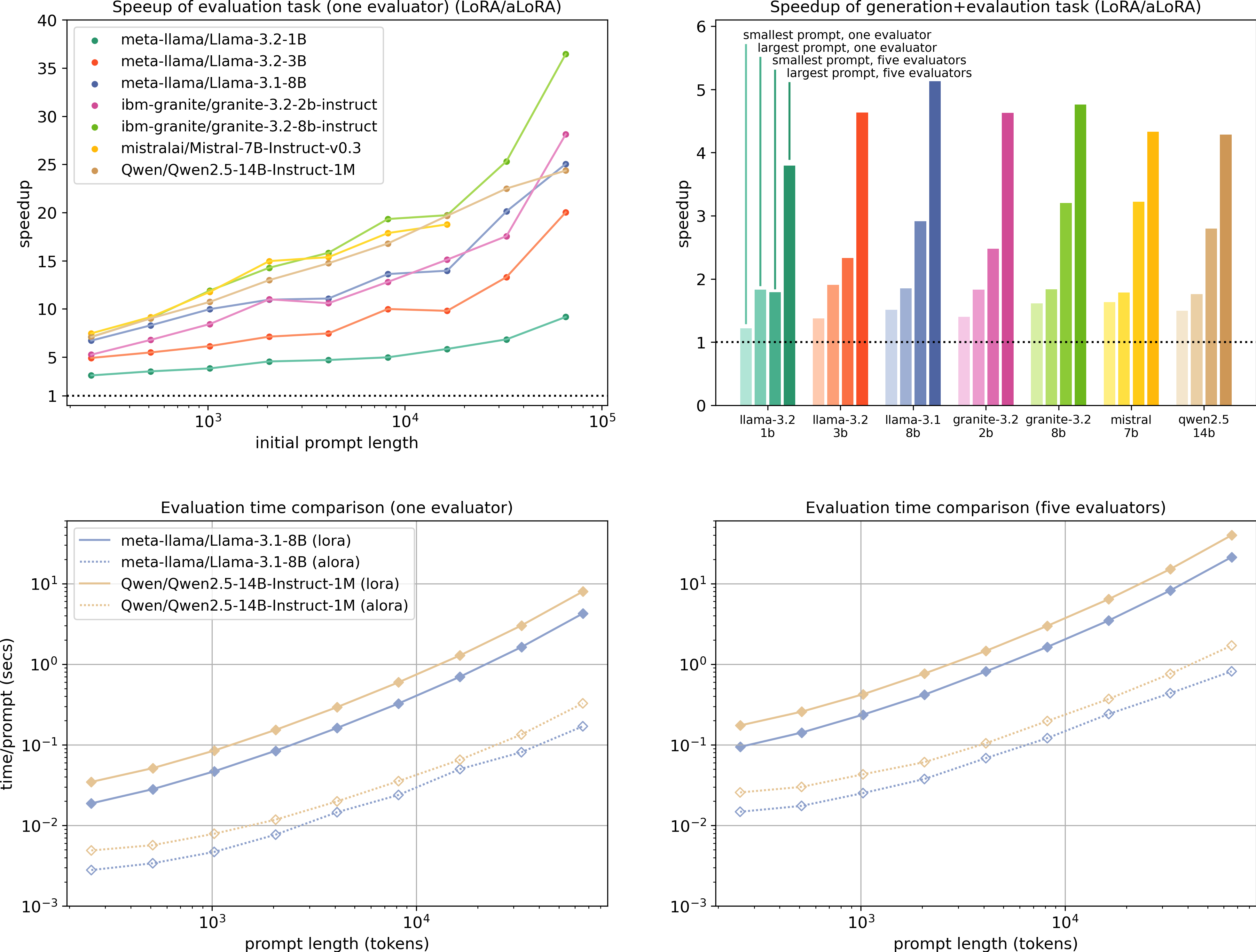}
    \caption{Comparison of aLoRA and LoRA when used as evaluators in a simple agentic pattern. \textbf{Top left}: Multiplicative speedup of an aLoRA evaluator vs LoRA, showing up to 35x improvement depending on base model and prompt length. \textbf{Top right}: Multiplicative speedup for the end-to-end pipeline including the base model generation (256 tokens) and 1 or 5 parallel eval (adapter) generations (16 tokens each). Despite the large fixed cost of the base model call, end-to-end aLoRA speedups are still significant, highlighting LoRA inefficiency. 
    \textbf{Bottom row}: Log-log plots for the wall clock evaluation, showing that even for small models, the delay for LoRA becomes significant in absolute terms as the prompt and number of evaluations in the agentic pipeline scale.  }
    \label{fig:synth_exp}
\end{figure}
In Figure \ref{fig:loras} we illustrate how computation and memory differ in LoRAs versus aLoRA in a simple agent pattern. A prompt (blue) is passed to a model which then prefills the corresponding KV cache, and then generates an answer (green). The task is now to evaluate the answer; in this example, two different hypothetical evaluators are used. An example of such evaluations may be to determine if the answer is faithful to the content provided, or how certain the model is of the answer, given the content. If these evaluators are implemented using LoRAs (Figure \ref{fig:loras}a), then to generate the evaluations, new KV cache computations need to be performed, independently for each evaluator. In the case of aLoRAs (Figure \ref{fig:loras}b), the KV cache of the underlying model is reused, resulting in significant savings.

To prove this experimentally, we modified vLLM \citep{zhu2023vllm} to be able to perform inference on aLoRAs with base model prefix cache reuse, and compared LoRA and aLoRA inference on 7 small ($\leq14$B) LLMs in the setting where the initial base model prompt length is varied, the base model generated answer is comprised of 256 tokens, and each adapter evaluation is given by 16 tokens.\footnote{Code for our vLLM implementation can be found at \url{https://github.com/tdoublep/vllm/tree/alora}.} Results are in Figure \ref{fig:synth_exp}, showing timing for both 1 evaluator and 5 evaluators.\footnote{While 5 evaluators may initially seem a lot, this is a stand-in for more complex pipelines where the input, retrieved documents, and generation are all being checked, as in \cite{danilevsky2025RAGintrinsics}.} In this experiment, we set the LoRA to have rank 8 and the aLoRA to be rank 32, illustrating that any need for rank increase with aLoRA has negligible inference time effect since the adapter parameters are still typically much less than 1\% of the base model parameters.

The speedup factor of aLoRA increases as prompt length increases, adapter generation time improving over 2 to 7$\times$ even for 250-token prompts, and over 20$\times$ for most models on the longest prompts. Despite most of the compute advantages coming from the prefill savings (time to 1st token of the adapter), we still see highly meaningful speedups in the end-to-end pipeline which also includes the large fixed cost of generating 256 tokens with the base model. Finally, in the last row, despite the adapters only generating 16 tokens, we see that the LoRA delays are noticeable in wall clock time. Note these wall clock delays will be further magnified for more complex agentic or reasoning pipelines where many rounds of input and generation happen before the overall pipeline completes.



\section{Finetuning Accuracy Results}
Having demonstrated significant inference time speedups for aLoRA, it remains to ascertain whether we lose generation accuracy or quality. We first train LoRA and aLoRA adapters on 4 LLMs on a set of benchmark SFT tasks, and then consider a set of more challenging ``intrinsics''-type tasks for which well-engineered, very recent LoRA adapters already exist against which we can compare. Our code implementing Activated LoRA has been contributed to the Huggingface PEFT library {\url{https://github.com/huggingface/peft}} \cite{peft}, which we also use for LoRA.\footnote{PEFT contains the model implementation and handles inference. Training in our experiments is done using a standard Huggingface TRL \cite{vonwerra2022trl} trainer (SFTTrainer).}

\subsection{Benchmark SFT tasks}
\begin{figure}[htb]
\centering
\tiny
\newcommand{\mystrut}{\rule{0pt}{5.5ex}}
\begin{tabular}{l||cc||cc||cc||cc}
\hline
\textbf{Task} 
  & \multicolumn{2}{c||}{\textbf{Llama 3.2 1B}}
  & \multicolumn{2}{c||}{\textbf{Llama 3.2 3B}}
  & \multicolumn{2}{c||}{\textbf{Llama 3.1 8B}}
  & \multicolumn{2}{c}{\textbf{Mistral 7B}} \\
\cline{2-9}
  & \textbf{LoRA} & \textbf{aLoRA}
  & \textbf{LoRA} & \textbf{aLoRA}
  & \textbf{LoRA} & \textbf{aLoRA}
  & \textbf{LoRA} & \textbf{aLoRA} \\
\hline
\shortstack[l]{Bengali Hate \\ Speech Classification}\mystrut
  & 79.30\% & 81.94\%
  & 86.34\% & 89.43\%
  & 70.04\% & 85.02\%
  & 72.25\% & 85.46\% \\
\hline
\shortstack[l]{WIQA: Effect \\ Classification}\mystrut
  & 68.92\% & 71.38\%
  & 76.15\% & 76.00\%
  & 74.92\% & 78.00\%
  & 61.08\% & 79.08\% \\
\hline
\shortstack[l]{MMLU Conceptual \\ Physics MCQA}\mystrut
  & 33.33\% & 38.89\%
  & 72.20\% & 66.67\%
  & 55.56\% & 55.56\%
  & 55.56\% & 55.56\% \\
\hline
\shortstack[l]{MMLU College Computer \\ Science MCQA}\mystrut
  & 66.67\% & 58.33\%
  & 66.67\% & 75.00\%
  & 66.67\% & 58.33\%
  & 75.00\% & 75.00\% \\
\hline
\shortstack[l]{SocialIQA Question \\ Generation}\mystrut
  & 86.00\% & 88.77\%
  & 89.85\% & 90.15\%
  & 52.00\% & 88.92\%
  & 97.23\% & 90.92\% \\
\hline
\shortstack[l]{Hindi Sentence \\ Perturbation}\mystrut
  & 69.60\% & 74.69\%
  & 98.30\% & 63.89\%
  & 86.11\% & 35.19\%
  & 99.23\% & 96.30\% \\
\hline
\shortstack[l]{SuperGLUE Question \\ Generation}\mystrut
  & 98.42\% & 95.79\%
  & 95.26\% & 96.84\%
  & 98.95\% & 92.11\%
  & 99.47\% & 92.11\% \\
\hline
\end{tabular}
\caption{LoRA vs.\ aLoRA accuracy (\%) on each task across base models after hyperparameter grid search guided by the validation set. While individual task performance is noisy due to the size of the datasets etc., there is no consistent accuracy loss from using aLoRA over LoRA. }
\label{tab:performance}
\end{figure}
In \cite{bruel2024compress}, a collection of 1000 instruction SFT tasks were curated and LoRA adapters were trained for each. These tasks were drawn from the Super-Natural Instructions \cite{wang2022super} benchmark collection of datasets, which in turn drew from sources such as MMLU \cite{hendryckstest2021} etc. 

From this list, we selected 7 tasks at random, roughly split between multiple-choice and freeform outputs. We excluded from consideration any tasks with (a) too small datasets, (b) overly open-ended instructions, or (c) difficulty such that trained adapters did no better than random chance.
The selected tasks and datasets are detailed in Appendix \ref{app:tasks}. 
For each task, we trained both LoRA and aLoRA adapters for 4 instruct-tuned models of various sizes. 
To ensure a fair comparison, for each task-model pair, using the validation set performance we performed grid search over 4 rank values and 5 learning rate values\footnote{Note that this experiment represents 1120 separate adapter training runs.}, and followed typical LoRA SFT best practices for all parameters. Note that we did not observe any pattern to the performance-maximizing rank value in these experiments, in particular there was not a clear pattern of LoRA or aLoRA needing higher or lower rank. The invocation sequence was set to equal the model's chat template generation prompt. See Appendix \ref{app:tasks} for further details. 

Results are shown in Figure \ref{tab:performance}. While there is decent variability due to the generally small size of the datasets and non-exhaustiveness of the hyperparameter search, overall it can be seen that neither aLoRA or LoRA has a consistent accuracy advantage. The median performance difference is 0.0\%, and the mean performance difference is 0.6\% in favor of LoRA (3.06\% in favor of activated LoRA when restricted to MC tasks). Using a 2 sided t-test, the p-value for rejecting the null hypothesis of equal means is 0.8, failing to reject at level 0.05. This supports our thesis that aLoRA can achieve comparable accuracy to LoRA while significantly outperforming in inference costs.


\subsection{Intrinsics Tasks}
In this subsection, we consider real-world tasks that fit the framework of \emph{intrinsics}, specifically, settings where the task requires a potentially long input that either (a) already has base model KV cache available from either being input to or generated by the base model, or (b) does not have (complete) base model KV cache available, but base model KV cache created through an aLoRA call will be used in a subsequent base model call. We draw from those proposed in \cite{danilevsky2025RAGintrinsics} for a RAG pipeline, though these intrinsics are not limited to RAG. See \cite{danilevsky2025RAGintrinsics} for deeper motivation for each intrinsic and extensive experimental results comparing their LoRA adapters to strong baselines. 

In \cite{danilevsky2025RAGintrinsics}, LoRA adapters were trained for the intrinsics using the \href{https://huggingface.co/ibm-granite/granite-3.2-8b-instruct}{Granite 3.2 8b Instruct} model. For direct comparison to their released adapters, 
we train corresponding aLoRA adapters (using the same datasets and chat-template-based formatting) and compare their performance in terms of generation quality and/or accuracy.\footnote{As they released LoRA adapters for only one base model, we limit ourselves to their choice of base model.} 
Note that since all the below models use the same base model, they can be swapped in and out as needed in \emph{the same flow}. The reader can envision the wide range of possibilities enabled by these intrinsics, e.g. for RAG. See the Appendix \ref{app:int} for additional details for this section.



\textbf{Uncertainty Quantification:}
This intrinsic provides a Certainty score for model responses to user questions. The model will respond with a number from 0 to 9, corresponding to $5\%, 15\%, 25\%, \dotsc, 95\%$ confidence respectively. Training data for these confidence scores are obtained by applying the UQ calibration method of \cite{shen2024thermometer} to a large, diverse set of benchmark question answering datasets, quantizing the predicted confidences and using these predicted confidences as SFT targets for the adapter. Following the chat template, the invocation sequence is 4 tokens: \texttt{<|start\_of\_role|>certainty<|end\_of\_role|>}. 
Note that the model is evaluating responses from its base model - in other words, it cannot be applied to generations from other models. The aLoRA architecture is thus particularly well-suited to this use case. In practice, the goal is to provide a highly efficient uncertainty score without having to resort to expensive larger judge models.  
\begin{wrapfigure}{r}{0.3\textwidth}
\tiny
\vspace{-.05in}
    \centering
    \begin{tabular}{ c c c }
        \toprule
        \textbf{Certainty Score} & \textbf{LoRA} & \textbf{aLoRA} \\ \midrule
        \textbf{MAE}      & 0.50     & 0.49      \\ \bottomrule
    \end{tabular}
    \caption{Test error for the Uncertainty Quantification intrinsic. Note that aLoRA does not lose meaningful performance.}
    \vspace{-.2in}
    \label{fig:res1}
\end{wrapfigure}



The Uncertainty Quantification intrinsic returns an ordinal score, so we compute the mean absolute error between the predicted integer and the target integer in the SFT data. Results are shown in Figure \ref{fig:res1}. Note that performance is largely unchanged using aLoRA instead of standard LoRA.

\begin{wrapfigure}{r}{.7\textwidth}
\centering
\scriptsize
\setlength{\tabcolsep}{4pt} 
\begin{tabular}{l l c c c c c c c}
\toprule
\textbf{Dataset} & \textbf{Adapter} 
  & \multicolumn{3}{c}{\textbf{Unans.}} 
  & \multicolumn{3}{c}{\textbf{Ans.}} 
  & \textbf{Weighted F1} \\
\cmidrule(lr){3-5}\cmidrule(lr){6-8}
& & P & R & F1 & P & R & F1 & \\
\midrule
\multirow{2}{*}{SQUADRUN Dev} 
  & LoRA  & 84.2 & 68.0 & 75.2 & 73.1 & 87.2 & 79.5 & 77.4 \\
  & aLoRA & 83.0 & 81.1 & 82.0 & 81.4 & 83.3 & 82.4 & 82.2 \\
\midrule
\multirow{2}{*}{MT-RAG Benchmark} 
  & LoRA  & 85.4 & 89.3 & 87.3 & 87.0 & 82.4 & 84.6 & 86.1 \\
  & aLoRA & 85.8 & 89.1 & 87.4 & 86.8 & 83.0 & 84.9 & 86.2 \\
\bottomrule
\end{tabular}
\caption{Comparison of classification performance of LoRA vs.\ aLoRA across the SQUADRUN Dev set and MT-RAG benchmark. Metrics are broken down by class (Unanswerable vs.\ Answerable) and include precision (P), recall (R), F1 score, and weighted F1.}
\vspace{-.1in}
\label{table:ans_class_combined}
\end{wrapfigure}
\textbf{Answerability Determination:} 
This intrinsic assesses whether a user's final query in a multi-turn conversation can be answered given the retrieved documents. 
In RAG or other settings, this decides whether to proceed with generation or abstain with an ``I don't know'' response. The invocation sequence is set to \verb+<|start_of_role|>answerability<|end_of_role|>+.
We tested on binary answerability classification on
the single-turn SQUADRun Benchmark~\cite{rajpurkar2018know} with the user query and the supporting documents, 
    and the multi-turn MT-RAG Benchmark~\cite{katsis2025mtrag} using full multi-turn conversation history along with the supporting documents. 
Figure~\ref{table:ans_class_combined} shows the results. Overall, the aLoRA model does not lose performance relative to the LoRA model. 

\paragraph{Query Rewrite:}
This intrinsic is generally applicable for multi-turn conversational use cases, and its role is to perform rewrites of user queries for better performance for the downstream tasks. It is especially useful in RAG settings, see the metrics and evaluation results below.  
The query rewrite task is as follows: given a multi-turn conversation between a user and an AI assistant, de-contextualize the last user utterance (query) by rewriting it (whenever necessary) into an equivalent version that is standalone and can be understood by itself.
The rewritten query can be sent to downstream tasks (e.g., to a retriever in a RAG setting) as a better replacement for the original user query, and without the need for (a potentially very long) context.

\begin{figure}[htb]
\centering
\tiny
\begin{subtable}[t]{.7\textwidth}
  \centering
  \begin{tabular}{l
      *{3}{c}
      *{3}{c}
      *{3}{c}
    }
    \toprule
     & \multicolumn{3}{c}{\bf Full MT-RAG}
     & \multicolumn{3}{c}{\bf Non-standalone}
     & \multicolumn{3}{c}{\bf Standalone} \\
    \cmidrule(lr){2-4} \cmidrule(lr){5-7} \cmidrule(lr){8-10}
    {\bf Strategy}
     & {\bf R@5} & {\bf R@10} & {\bf R@20}
     & {\bf R@5} & {\bf R@10} & {\bf R@20}
     & {\bf R@5} & {\bf R@10} & {\bf R@20} \\
    \midrule
    aLoRA & 0.54 & 0.66 & 0.74 & 0.42 & 0.54 & 0.64 & 0.63 & 0.75 & 0.82 \\
    LoRA  & 0.56 & 0.68 & 0.76 & 0.44 & 0.57 & 0.66 & 0.63 & 0.75 & 0.83 \\
    \bottomrule
  \end{tabular}
  \subcaption{Retrieval (Recall@5, @10, and @20)}
  \label{table:rewrite_all_retrieval}
\end{subtable}

\vspace{1em}

\begin{subtable}[t]{.7\textwidth}
  \centering
  \begin{tabular}{l
      *{2}{c}
      *{2}{c}
      *{2}{c}
    }
    \toprule
     & \multicolumn{2}{c}{\bf Full MT-RAG}
     & \multicolumn{2}{c}{\bf Non-standalone}
     & \multicolumn{2}{c}{\bf Standalone} \\
    \cmidrule(lr){2-3} \cmidrule(lr){4-5} \cmidrule(lr){6-7}
    {\bf Strategy}
     & {\bf RAGAS-F} & {\bf RAD-Bench}
     & {\bf RAGAS-F} & {\bf RAD-Bench}
     & {\bf RAGAS-F} & {\bf RAD-Bench} \\
    \midrule
    aLoRA & 0.81 & 0.69 & 0.77 & 0.69 & 0.83 & 0.70 \\
    LoRA  & 0.81 & 0.70 & 0.79 & 0.69 & 0.83 & 0.71 \\
    \bottomrule
  \end{tabular}
  \subcaption{Answer generation quality (RAGAS-F, RAD-Bench)}
  \label{table:rewrite_all_generation}
  \vspace{-.1in}
\end{subtable}
\caption{Impact of query rewrite strategies on both retrieval and generation tasks across  subsets of MT-RAG.}
\label{table:rewrite_all}
\end{figure}



Retrieval recall evaluation (Recall@k) with different query rewrite strategies, evaluated on full, non-standalone and standalone subsets of MT-RAG dataset \cite{katsis2025mtrag} are shown in Figure~\ref{table:rewrite_all_retrieval}. All retrieved passages are obtained using the Elser retriever with the same settings as in the above paper. We evaluate on three different subsets of MT-RAG detailed in the appendix.

Answer quality evaluation using RAGAS Faithfulness (RAGAS-F) and RAD-Bench on full, non-standalone and standalone subsets of MT-RAG dataset are shown in Table~\ref{table:rewrite_all_generation} (see appendix for details).
Note that throughout, performance numbers for aLoRA and LoRA are within a point or two. See \cite{danilevsky2025RAGintrinsics} for additional comparisons showing that these approaches outperform benchmarks and are very close to the performance with gold rewrites.

\begin{wrapfigure}{r}{0.3\textwidth}
\centering
\tiny
\begin{tabular}{llll}
\toprule 
 & {\bf Acc} & {\bf TPR} & {\bf FPR} \\
\midrule 
aLoRA  & 0.925     &  0.863	    & 0.013      \\
 LoRA   & 0.943     &  0.898     & 0.011     \\
\bottomrule 
\end{tabular}
\caption{Performance of jailbreak risk detectors.}
\label{table:jailbreak_risk_detector}
\vspace{-.1in}
\end{wrapfigure}
\textbf{Jailbreak Detection:}
This intrinsic is designed for detecting jailbreak risk within user prompts. 
Prompts with jailbreak risk vary across a wide range of attack styles - from direct instructions, to encoding-style, social-hacking based attacks and even ones that exploit special token or context overload~\citep{rawat2024attack}.
In our experiments we focused on training intrinsics for detecting social hacking style of adversarial prompts. As with prior intrinsics, the aLoRA/LoRA detectors are trained in the same conditions with a rank of 32. 
The intrinsic is trained to return a binary label - ``Y'' indicating jailbreak risk present and ``N'' indicating no risk.

After training, we evaluate the jailbreak intrinsic on new out-of-distribution datasets not used in training to robustly assess generalizability \cite{zizzo2025adversarial}. The out-of-distribution datasets comprise of 3,282 samples containing a mixture of samples with jailbreak risk~\citep{shen2024do, lin2023toxicchat, wan2024cyberseceval} and benign samples~\citep{DatabricksBlog2023DollyV2}.
As with the other intrinsics, we see very little performance difference between LoRA and aLoRA.


%

\section{Conclusion}
We presented Activated LoRA (aLoRA), a novel modification of the LoRA framework enabling efficient and dynamic adaptation of large language models without requiring recomputation of the key-value (KV) cache. By modifying LoRA to adapt weights only for tokens generated after activation, aLoRA facilitates seamless integration into multiturn settings, enabling the use of specialized ``intrinsics'' for well-defined operations within a broader conversational or processing pipeline. While aLoRA adapters sometimes require higher rank $r$ than LoRA, this is not a meaningful downside at inference time since the rank $r$ matrix multiplications are a very small part of overall inference costs (although training costs do increase). 

Our experiments demonstrate that aLoRA maintains competitive accuracy compared to standard LoRA while significantly reducing inference costs. This capability was showcased through applications in uncertainty quantification, answerability determination, hallucination detection, query rewrite, and jailbreak detection. The flexibility and efficiency of aLoRA highlight its potential to streamline the deployment of modular, task-specific models in complex workflows, paving the way for more adaptive and responsive LLM-driven systems. 

We anticipate that aLoRA can be profitably applied to a vast array of intrinsics-style finetuning tasks that can be created for applications throughout the agentic, chat, and reasoning spaces. Future work will explore developing and proving these expanded use cases. 

\newpage

\bibliography{references}

\begin{thebibliography}{46}
\providecommand{\natexlab}[1]{#1}
\providecommand{\url}[1]{\texttt{#1}}
\expandafter\ifx\csname urlstyle\endcsname\relax
  \providecommand{\doi}[1]{doi: #1}\else
  \providecommand{\doi}{doi: \begingroup \urlstyle{rm}\Url}\fi

\bibitem[awe()]{awesomechatgptprompts}
Awesome chatgpt prompts.
\newblock \url{https://github.com/f/awesome-chatgpt-prompts}.

\bibitem[Br{\"u}el-Gabrielsson et~al.(2024)Br{\"u}el-Gabrielsson, Zhu,
  Bhardwaj, Choshen, Greenewald, Yurochkin, and Solomon]{bruel2024compress}
Rickard Br{\"u}el-Gabrielsson, Jiacheng Zhu, Onkar Bhardwaj, Leshem Choshen,
  Kristjan Greenewald, Mikhail Yurochkin, and Justin Solomon.
\newblock Compress then serve: Serving thousands of lora adapters with little
  overhead.
\newblock \emph{arXiv preprint arXiv:2407.00066}, 2024.

\bibitem[Clark et~al.(2019)Clark, Lee, Chang, Kwiatkowski, Collins, and
  Toutanova]{clark2019boolq}
Christopher Clark, Kenton Lee, Ming-Wei Chang, Tom Kwiatkowski, Michael
  Collins, and Kristina Toutanova.
\newblock Boolq: Exploring the surprising difficulty of natural yes/no
  questions.
\newblock \emph{arXiv preprint arXiv:1905.10044}, 2019.

\bibitem[Conover et~al.(2023)Conover, Hayes, Mathur, Xie, Wan, Shah, Ghodsi,
  Wendell, Zaharia, and Xin]{DatabricksBlog2023DollyV2}
Mike Conover, Matt Hayes, Ankit Mathur, Jianwei Xie, Jun Wan, Sam Shah, Ali
  Ghodsi, Patrick Wendell, Matei Zaharia, and Reynold Xin.
\newblock Free dolly: Introducing the world's first truly open
  instruction-tuned {LLM}, 2023.
\newblock URL
  \url{https://www.databricks.com/blog/2023/04/12/dolly-first-open-commercially-viable-instruction-tuned-llm}.

\bibitem[Danilevsky et~al.(2025)Danilevsky, Greenewald, Gunasekara, Hanafi, He,
  Katsis, Killamsetty, Nandwani, Popa, Raghu, Reiss, Shah, Tran, Zhu, and
  Lastras]{danilevsky2025RAGintrinsics}
Marina Danilevsky, Kristjan Greenewald, Chulaka Gunasekara, Maeda Hanafi,
  Lihong He, Yannis Katsis, Krishnateja Killamsetty, Yatin Nandwani, Lucian
  Popa, Dinesh Raghu, Frederick Reiss, Vraj Shah, Khoi-Nguyen Tran, Huaiyu Zhu,
  and Luis Lastras.
\newblock A library of {LLM} intrinsics for retrieval-augmented generation.
\newblock \emph{arXiv preprint arXiv:2504.11704}, 2025.

\bibitem[Deng et~al.(2023)Deng, Wang, Feng, Deng, Wang, and He]{deng2023attack}
Boyi Deng, Wenjie Wang, Fuli Feng, Yang Deng, Qifan Wang, and Xiangnan He.
\newblock Attack prompt generation for red teaming and defending large language
  models.
\newblock In Houda Bouamor, Juan Pino, and Kalika Bali, editors, \emph{Findings
  of the Association for Computational Linguistics: EMNLP 2023}, pages
  2176--2189, Singapore, December 2023. Association for Computational
  Linguistics.
\newblock \doi{10.18653/v1/2023.findings-emnlp.143}.
\newblock URL \url{https://aclanthology.org/2023.findings-emnlp.143}.

\bibitem[Dettmers et~al.(2023)Dettmers, Pagnoni, Holtzman, and
  Zettlemoyer]{dettmers2023qlora}
Tim Dettmers, Artidoro Pagnoni, Ari Holtzman, and Luke Zettlemoyer.
\newblock Q{LoRA}: Efficient finetuning of quantized {LLM}s.
\newblock \emph{Advances in Neural Information Processing Systems},
  36:\penalty0 10088--10115, 2023.

\bibitem[Ding et~al.(2023)Ding, Chen, Xu, Qin, Zheng, Hu, Liu, Sun, and
  Zhou]{ding2023enhancing}
Ning Ding, Yulin Chen, Bokai Xu, Yujia Qin, Zhi Zheng, Shengding Hu, Zhiyuan
  Liu, Maosong Sun, and Bowen Zhou.
\newblock Enhancing chat language models by scaling high-quality instructional
  conversations.
\newblock \emph{arXiv preprint 2305.14233}, 2023.

\bibitem[Grattafiori et~al.(2024)Grattafiori, Dubey, Jauhri, Pandey, Kadian,
  Al-Dahle, Letman, Mathur, Schelten, and Vaughan]{grattafiori2024llama}
Aaron Grattafiori, Abhimanyu Dubey, Abhinav Jauhri, Abhinav Pandey, Abhishek
  Kadian, Ahmad Al-Dahle, Aiesha Letman, Akhil Mathur, Alan Schelten, and Alex
  Vaughan.
\newblock The llama 3 herd of models.
\newblock \emph{arXiv preprint arXiv:2407.21783}, 2024.

\bibitem[He et~al.(2021)He, Zhou, Ma, Berg-Kirkpatrick, and
  Neubig]{he2021towards}
Junxian He, Chunting Zhou, Xuezhe Ma, Taylor Berg-Kirkpatrick, and Graham
  Neubig.
\newblock Towards a unified view of parameter-efficient transfer learning.
\newblock \emph{arXiv preprint arXiv:2110.04366}, 2021.

\bibitem[Hendrycks et~al.(2021)Hendrycks, Burns, Basart, Zou, Mazeika, Song,
  and Steinhardt]{hendryckstest2021}
Dan Hendrycks, Collin Burns, Steven Basart, Andy Zou, Mantas Mazeika, Dawn
  Song, and Jacob Steinhardt.
\newblock Measuring massive multitask language understanding.
\newblock \emph{Proceedings of the International Conference on Learning
  Representations (ICLR)}, 2021.

\bibitem[Houlsby et~al.(2019)Houlsby, Giurgiu, Jastrzebski, Morrone, and
  de~Laroussilhe]{houlsby2019parameter}
Neil Houlsby, Andrei Giurgiu, Stanislaw Jastrzebski, Marco Morrone, and Quentin
  de~Laroussilhe.
\newblock Parameter-efficient transfer learning for {NLP}.
\newblock In \emph{International Conference on Machine Learning}, pages
  2790--2799, 2019.

\bibitem[Hu et~al.(2021)Hu, Shen, Wallis, Allen-Zhu, Li, Wang, Wang, and
  Chen]{hu2021lora}
Edward~J Hu, Yelong Shen, Phillip Wallis, Zeyuan Allen-Zhu, Yuanzhi Li, Shean
  Wang, Lu~Wang, and Weizhu Chen.
\newblock Lora: Low-rank adaptation of large language models.
\newblock \emph{arXiv preprint arXiv:2106.09685}, 2021.

\bibitem[Jiang et~al.(2025)Jiang, Ruan, Lastras, Kapanipathi, and
  Hashimoto]{jiang2025puttingcontextsimplifyingagents}
Mingjian Jiang, Yangjun Ruan, Luis Lastras, Pavan Kapanipathi, and Tatsunori
  Hashimoto.
\newblock Putting it all into context: Simplifying agents with {LCLM}s, 2025.
\newblock URL \url{https://arxiv.org/abs/2505.08120}.

\bibitem[Katsis et~al.(2025)Katsis, Rosenthal, Fadnis, Gunasekara, Lee, Popa,
  Shah, Zhu, Contractor, and Danilevsky]{katsis2025mtrag}
Yannis Katsis, Sara Rosenthal, Kshitij Fadnis, Chulaka Gunasekara, Young-Suk
  Lee, Lucian Popa, Vraj Shah, Huaiyu Zhu, Danish Contractor, and Marina
  Danilevsky.
\newblock {MTRAG}: A multi-turn conversational benchmark for evaluating
  retrieval-augmented generation systems, 2025.
\newblock URL \url{https://arxiv.org/abs/2501.03468}.

\bibitem[Kwon et~al.(2023)Kwon, Li, Zhuang, Sheng, Zheng, Yu, Gonzalez, Zhang,
  and Stoica]{kwon2023efficient}
Woosuk Kwon, Zhuohan Li, Siyuan Zhuang, Ying Sheng, Lianmin Zheng, Cody~Hao Yu,
  Joseph Gonzalez, Hao Zhang, and Ion Stoica.
\newblock Efficient memory management for large language model serving with
  paged attention.
\newblock In \emph{Proceedings of the 29th Symposium on Operating Systems
  Principles}, pages 611--626, 2023.

\bibitem[Lewis et~al.(2020)Lewis, Perez, Piktus, Petroni, Karpukhin, Goyal,
  K{\"u}ttler, Lewis, Yih, and Rockt{\"a}schel]{lewis2020retrieval}
Patrick Lewis, Ethan Perez, Aleksandra Piktus, Fabio Petroni, Vladimir
  Karpukhin, Naman Goyal, Heinrich K{\"u}ttler, Mike Lewis, Wen-tau Yih, and
  Tim Rockt{\"a}schel.
\newblock Retrieval-augmented generation for knowledge-intensive {NLP} tasks.
\newblock \emph{Advances in Neural Information Processing Systems},
  33:\penalty0 9459--9474, 2020.

\bibitem[Li and Liang(2021)]{li2021prefix}
Xi~Li and Percy Liang.
\newblock Prefix-tuning: Optimizing continuous prompts for generation.
\newblock In \emph{International Conference on Learning Representations}, 2021.

\bibitem[Li et~al.(2024)Li, Huang, Yang, Venkitesh, Locatelli, Ye, Cai, Lewis,
  and Chen]{li2024snapkv}
Yuhong Li, Yingbing Huang, Bowen Yang, Bharat Venkitesh, Acyr Locatelli,
  Hanchen Ye, Tianle Cai, Patrick Lewis, and Deming Chen.
\newblock Snap{KV}: {LLM} knows what you are looking for before generation.
\newblock \emph{Advances in Neural Information Processing Systems},
  37:\penalty0 22947--22970, 2024.

\bibitem[Lin et~al.(2023)Lin, Wang, Tong, Wang, Guo, Wang, and
  Shang]{lin2023toxicchat}
Zi~Lin, Zihan Wang, Yongqi Tong, Yangkun Wang, Yuxin Guo, Yujia Wang, and
  Jingbo Shang.
\newblock Toxic{C}hat: Unveiling hidden challenges of toxicity detection in
  real-world user-{AI} conversation.
\newblock In Houda Bouamor, Juan Pino, and Kalika Bali, editors, \emph{Findings
  of the Association for Computational Linguistics: {EMNLP} 2023, Singapore,
  December 6-10, 2023}, pages 4694--4702. Association for Computational
  Linguistics, 2023.
\newblock \doi{10.18653/V1/2023.FINDINGS-EMNLP.311}.
\newblock URL \url{https://doi.org/10.18653/v1/2023.findings-emnlp.311}.

\bibitem[Liu et~al.(2024{\natexlab{a}})Liu, Feng, Wang, Wang, Liu, Zhao, Dengr,
  Ruan, Dai, and Guo]{liu2024deepseek}
Aixin Liu, Bei Feng, Bin Wang, Bingxuan Wang, Bo~Liu, Chenggang Zhao, Chengqi
  Dengr, Chong Ruan, Damai Dai, and Daya Guo.
\newblock Deepseek-v2: A strong, economical, and efficient mixture-of-experts
  language model.
\newblock \emph{arXiv preprint arXiv:2405.04434}, 2024{\natexlab{a}}.

\bibitem[Liu et~al.(2023)Liu, Lin, Hewitt, Paranjape, Bevilacqua, Petroni, and
  Liang]{liu2023lost}
Nelson~F Liu, Kevin Lin, John Hewitt, Ashwin Paranjape, Michele Bevilacqua,
  Fabio Petroni, and Percy Liang.
\newblock Lost in the middle: How language models use long contexts.
\newblock \emph{arXiv preprint arXiv:2307.03172}, 2023.

\bibitem[Liu et~al.(2024{\natexlab{b}})Liu, Wang, Yin, Molchanov, Wang, Cheng,
  and Chen]{liu2024dora}
Shih-Yang Liu, Chien-Yi Wang, Hongxu Yin, Pavlo Molchanov, Yu-Chiang~Frank
  Wang, Kwang-Ting Cheng, and Min-Hung Chen.
\newblock Do{RA}: Weight-decomposed low-rank adaptation.
\newblock In \emph{Forty-first International Conference on Machine Learning},
  2024{\natexlab{b}}.

\bibitem[Liu et~al.(2025)Liu, Tang, Chen, Dong, Li, Zhou, Li, Hu, and
  Chu]{liu2025can}
Xiang Liu, Zhenheng Tang, Hong Chen, Peijie Dong, Zeyu Li, Xiuze Zhou, Bo~Li,
  Xuming Hu, and Xiaowen Chu.
\newblock Can {LLM}s maintain fundamental abilities under {KV} cache
  compression?
\newblock \emph{arXiv preprint arXiv:2502.01941}, 2025.

\bibitem[Mangrulkar et~al.(2022)Mangrulkar, Gugger, Debut, Belkada, Paul, and
  Bossan]{peft}
Sourab Mangrulkar, Sylvain Gugger, Lysandre Debut, Younes Belkada, Sayak Paul,
  and Benjamin Bossan.
\newblock {PEFT}: State-of-the-art parameter-efficient fine-tuning methods.
\newblock \url{https://github.com/huggingface/peft}, 2022.

\bibitem[OpenAI(2024)]{openai_finetuning_api}
OpenAI.
\newblock Open{AI} fine-tuning {API}.
\newblock \url{https://platform.openai.com/docs/guides/fine-tuning}, 2024.

\bibitem[Pfister et~al.(2025)Pfister, Volhejn, Knott, Arias, Bazi{\'n}ska,
  Bichurin, Commike, Darling, Dienes, and Fiedler]{Gandalf}
Niklas Pfister, V{\'a}clav Volhejn, Manuel Knott, Santiago Arias, Julia
  Bazi{\'n}ska, Mykhailo Bichurin, Alan Commike, Janet Darling, Peter Dienes,
  and Matthew Fiedler.
\newblock Gandalf the red: Adaptive security for {LLM}s.
\newblock \emph{arXiv preprint arXiv:2501.07927}, 2025.

\bibitem[Predibase(2024)]{predibase}
Predibase.
\newblock Multi-lora inference server that scales to 1000s of fine-tuned
  {LLM}s.
\newblock \url{https://loraexchange.ai}, 2024.

\bibitem[Radford et~al.(2019)Radford, Wu, Child, Luan, Amodei, and
  Sutskever]{radford2019language}
Alec Radford, Jeffrey Wu, Rewon Child, David Luan, Dario Amodei, and Ilya
  Sutskever.
\newblock Language models are unsupervised multitask learners.
\newblock \emph{OpenAI blog}, 1\penalty0 (8):\penalty0 9, 2019.

\bibitem[Rajpurkar et~al.(2018)Rajpurkar, Jia, and Liang]{rajpurkar2018know}
Pranav Rajpurkar, Robin Jia, and Percy Liang.
\newblock Know what you don’t know: Unanswerable questions for {SQuAD}.
\newblock In \emph{Proceedings of the 56th Annual Meeting of the Association
  for Computational Linguistics (Volume 2: Short Papers)}, pages 784--789,
  2018.

\bibitem[Rawat et~al.(2024)Rawat, Schoepf, Zizzo, Cornacchia, Hameed, Fraser,
  Miehling, Buesser, Daly, Purcell, Sattigeri, Chen, and
  Varshney]{rawat2024attack}
Ambrish Rawat, Stefan Schoepf, Giulio Zizzo, Giandomenico Cornacchia,
  Muhammad~Zaid Hameed, Kieran Fraser, Erik Miehling, Beat Buesser,
  Elizabeth~M. Daly, Mark Purcell, Prasanna Sattigeri, Pin{-}Yu Chen, and
  Kush~R. Varshney.
\newblock Attack atlas: {A} practitioner's perspective on challenges and
  pitfalls in red teaming {GenAI}.
\newblock \emph{CoRR}, abs/2409.15398, 2024.

\bibitem[Shen et~al.(2024{\natexlab{a}})Shen, Das, Greenewald, Sattigeri,
  Wornell, and Ghosh]{shen2024thermometer}
Maohao Shen, Subhro Das, Kristjan Greenewald, Prasanna Sattigeri, Gregory
  Wornell, and Soumya Ghosh.
\newblock Thermometer: Towards universal calibration for large language models.
\newblock In \emph{International Conference on Machine Learning},
  2024{\natexlab{a}}.

\bibitem[Shen et~al.(2024{\natexlab{b}})Shen, Chen, Backes, Shen, and
  Zhang]{shen2024do}
Xinyue Shen, Zeyuan Chen, Michael Backes, Yun Shen, and Yang Zhang.
\newblock "{D}o anything now": Characterizing and evaluating in-the-wild
  jailbreak prompts on large language models.
\newblock In Bo~Luo, Xiaojing Liao, Jun Xu, Engin Kirda, and David Lie,
  editors, \emph{Proceedings of the 2024 on {ACM} {SIGSAC} Conference on
  Computer and Communications Security, {CCS} 2024, Salt Lake City, UT, USA,
  October 14-18, 2024}, pages 1671--1685. {ACM}, 2024{\natexlab{b}}.

\bibitem[Sheng et~al.(2023)Sheng, Cao, Li, Hooper, Lee, Yang, Chou, Zhu, Zheng,
  and Keutzer]{sheng2023s}
Ying Sheng, Shiyi Cao, Dacheng Li, Coleman Hooper, Nicholas Lee, Shuo Yang,
  Christopher Chou, Banghua Zhu, Lianmin Zheng, and Kurt Keutzer.
\newblock S-{LoRA}: Serving thousands of concurrent {LoRA} adapters.
\newblock \emph{arXiv preprint arXiv:2311.03285}, 2023.

\bibitem[Singh et~al.(2023)Singh, de~las Casas~Florian, Lucile, Marie-Anne,
  Le~Scao, Lacroix, Jiang, and Sablayrolles]{singh2023mistral}
Arthur Mensch Chris Bamford~Devendra Singh, Chaplot~Diego de~las Casas~Florian,
  Bressand Gianna Lengyel Guillaume~Lample Lucile, Saulnier L{\'e}lio
  Renard~Lavaud Marie-Anne, Lachaux Pierre Stock~Teven Le~Scao, Thibaut Lavril
  Thomas Wang~Timoth{\'e}e Lacroix, William El Sayed Albert~Q Jiang, and
  Alexandre Sablayrolles.
\newblock Mistral 7b.
\newblock \emph{arXiv preprint arXiv:2310.06825}, 2023.

\bibitem[TogetherAI(2024)]{together_ai}
TogetherAI.
\newblock Together fine-tuning.
\newblock \url{https://www.together.ai/products#fine-tuning}, 2024.

\bibitem[Vaswani et~al.(2017)Vaswani, Shazeer, Parmar, Uszkoreit, Jones, Gomez,
  Kaiser, and Polosukhin]{vaswani2017attention}
Ashish Vaswani, Noam Shazeer, Niki Parmar, Jakob Uszkoreit, Llion Jones,
  Aidan~N Gomez, {\L}ukasz Kaiser, and Illia Polosukhin.
\newblock Attention is all you need.
\newblock In \emph{Advances in Neural Information Processing Systems}, pages
  5998--6008, 2017.

\bibitem[von Werra et~al.(2020)von Werra, Belkada, Tunstall, Beeching, Thrush,
  Lambert, Huang, Rasul, and Gallouédec]{vonwerra2022trl}
Leandro von Werra, Younes Belkada, Lewis Tunstall, Edward Beeching, Tristan
  Thrush, Nathan Lambert, Shengyi Huang, Kashif Rasul, and Quentin Gallouédec.
\newblock {TRL}: Transformer reinforcement learning.
\newblock \url{https://github.com/huggingface/trl}, 2020.

\bibitem[Wan et~al.(2024)Wan, Nikolaidis, Song, Molnar, Crnkovich, Grace,
  Bhatt, Chennabasappa, Whitman, Ding, Ionescu, Li, and
  Saxe]{wan2024cyberseceval}
Shengye Wan, Cyrus Nikolaidis, Daniel Song, David Molnar, James Crnkovich,
  Jayson Grace, Manish Bhatt, Sahana Chennabasappa, Spencer Whitman, Stephanie
  Ding, Vlad Ionescu, Yue Li, and Joshua Saxe.
\newblock {CYBERSECEVAL} 3: Advancing the evaluation of cybersecurity risks and
  capabilities in large language models.
\newblock \emph{CoRR}, abs/2408.01605, 2024.

\bibitem[Wang et~al.(2025)Wang, Asilis, Akg{\"u}l, Bilgin, Liu, and
  Neiswanger]{wang2025tina}
Shangshang Wang, Julian Asilis, {\"O}mer~Faruk Akg{\"u}l, Enes~Burak Bilgin,
  Ollie Liu, and Willie Neiswanger.
\newblock Tina: Tiny reasoning models via lora.
\newblock \emph{arXiv preprint arXiv:2504.15777}, 2025.

\bibitem[Wang et~al.(2024)Wang, Li, Song, Xu, Tang, Zhuge, Pan, Song, Li, and
  Singh]{wang2024openhands}
Xingyao Wang, Boxuan Li, Yufan Song, Frank~F Xu, Xiangru Tang, Mingchen Zhuge,
  Jiayi Pan, Yueqi Song, Bowen Li, and Jaskirat Singh.
\newblock Openhands: An open platform for ai software developers as generalist
  agents.
\newblock In \emph{The Thirteenth International Conference on Learning
  Representations}, 2024.

\bibitem[Wang et~al.(2022)Wang, Mishra, Alipoormolabashi, Kordi, Mirzaei,
  Arunkumar, Ashok, Dhanasekaran, Naik, and Stap]{wang2022super}
Yizhong Wang, Swaroop Mishra, Pegah Alipoormolabashi, Yeganeh Kordi, Amirreza
  Mirzaei, Anjana Arunkumar, Arjun Ashok, Arut~Selvan Dhanasekaran, Atharva
  Naik, and David Stap.
\newblock Super-naturalinstructions: Generalization via declarative
  instructions on 1600+ {NLP} tasks.
\newblock In \emph{2022 Conference on Empirical Methods in Natural Language
  Processing, EMNLP 2022}, 2022.

\bibitem[Wolf et~al.(2019)]{wolf2019huggingface}
Thomas Wolf et~al.
\newblock Huggingface's transformers: State-of-the-art natural language
  processing.
\newblock \url{https://huggingface.co/transformers/}, 2019.

\bibitem[Yang et~al.(2024)Yang, Jimenez, Wettig, Lieret, Yao, Narasimhan, and
  Press]{yang2024swe}
John Yang, Carlos Jimenez, Alexander Wettig, Kilian Lieret, Shunyu Yao, Karthik
  Narasimhan, and Ofir Press.
\newblock {SWE}-agent: Agent-computer interfaces enable automated software
  engineering.
\newblock \emph{Advances in Neural Information Processing Systems},
  37:\penalty0 50528--50652, 2024.

\bibitem[Zhu et~al.(2023)]{zhu2023vllm}
Xu~Zhu et~al.
\newblock {vLLM}: Fast inference of large language models.
\newblock \url{https://github.com/vllm-project/vllm}, 2023.

\bibitem[Zizzo et~al.(2025)Zizzo, Cornacchia, Fraser, Hameed, Rawat, Buesser,
  Purcell, Chen, Sattigeri, and Varshney]{zizzo2025adversarial}
Giulio Zizzo, Giandomenico Cornacchia, Kieran Fraser, Muhammad~Zaid Hameed,
  Ambrish Rawat, Beat Buesser, Mark Purcell, Pin-Yu Chen, Prasanna Sattigeri,
  and Kush Varshney.
\newblock Adversarial prompt evaluation: Systematic benchmarking of guardrails
  against prompt input attacks on {LLM}s.
\newblock \emph{arXiv preprint arXiv:2502.15427}, 2025.

\end{thebibliography}
\bibliographystyle{plainnat}
\appendix

\section{Invocation sequences}\label{app:invoke}
In our implementation, the aLoRA weights are activated one token after the \emph{start} of the invocation sequence. 
For the benchmark dataset experiments, we simply set the invocation sequence to be the ``generation prompt'' specified by the base model's chat template. For the various intrinsics experiments, we most often used a 3-4 token length sequence which included a one-word description of the task.

The invocation sequence has several benefits. The invocation sequence can be designed to
\begin{itemize}
    \item Conform to the chat template (for instance by making the aLoRA response its own turn with a specialized role).
    \item Provide an (optional) short prompt to aid the learning process.
    \item Give the adapter weights a few more tokens to process the input prior to generating the output, often improving performance in practice.
\end{itemize}
In our current implementation, the invocation sequence denoting the starting point of adaptation is fixed, with the option to have a variable prompt follow it prior to actual generation. In principle, there is no need for any consistency of input sequences so long as the point at which the adapter weights should be turned on can be indicated by some means.

\section{LLM Transformer architecture} A generic architecture is shown in Figure \ref{fig:llm}. 
\begin{figure}[htb]
    \centering
    \includegraphics[width=4in]{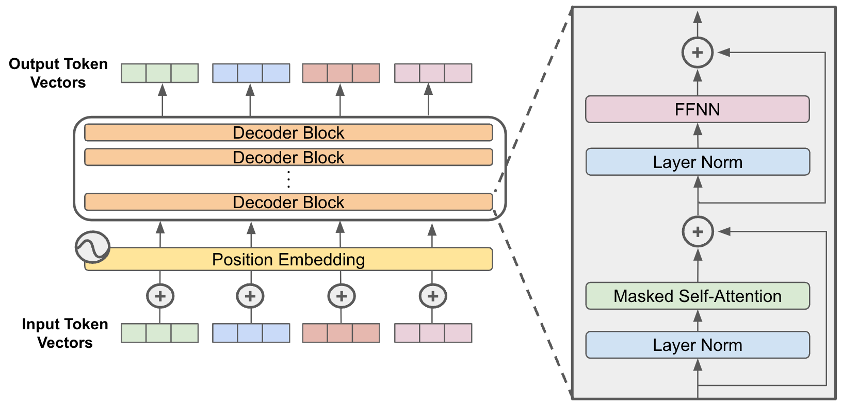}
    \caption{A generic LLM architecture.}
    \label{fig:llm}
\end{figure}

\section{Proofs}

We repeat the propositions here from the main text and prove them.

\begin{proposition}[Proposition \ref{prop:1} (KV equivalence and aLoRA inference)]
For the causal decoder-only transformers we consider, the keys and values (actually, all internal states) prior to $t_{\mathrm{invoke}}$ are identical for the base model and any aLoRA adapter model using \eqref{eq:weights2}. Specifically, $K^{\mathrm{base}}_{1:t_{\mathrm{invoke}}-1} = K^{\mathrm{adapter}}_{1:t_{\mathrm{invoke}}-1}$, and $V^{\mathrm{base}}_{1:t_{\mathrm{invoke}}-1} = V^{\mathrm{adapter}}_{1:t_{\mathrm{invoke}}-1}$. Inference with the aLoRA adapted model can be done causally one token at a time (by simply increasing $t$ in \eqref{eq:weights2} iteratively) with KV cache reuse.
\end{proposition}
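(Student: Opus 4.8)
The plan is to prove the key--value equality by induction on the transformer's depth, with the causal mask supplying the crucial ``no backward information flow'' property at each layer. The target is to show that for every layer $\ell$ and every token position $i < t_{\mathrm{invoke}}$, the entire hidden state (and hence $K_i, V_i$) computed by the aLoRA model equals that of the base model. Since $K_{1:t_{\mathrm{invoke}}-1}$ and $V_{1:t_{\mathrm{invoke}}-1}$ are merely the stacked per-token keys and values over these positions, the two stated equalities follow immediately, and proving agreement of the full internal state settles the stronger parenthetical claim at the same time.

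First I would fix notation: let $x_i^{(\ell)}$ denote the input hidden state of token $i$ to layer $\ell$, with $x_i^{(1)}$ the token embedding. The base case is $x_i^{(1),\mathrm{base}} = x_i^{(1),\mathrm{adapter}}$ for all $i$, since both models share the same input tokens and the same (unadapted) embedding table. For the inductive step, assume $x_i^{(\ell),\mathrm{base}} = x_i^{(\ell),\mathrm{adapter}}$ for all $i < t_{\mathrm{invoke}}$. By the aLoRA rule \eqref{eq:weights2}, any token $i < t_{\mathrm{invoke}}$ has its query, key, and value computed with the unadapted weights $W^Q, W^K, W^V$; combined with the inductive hypothesis this gives $Q_i, K_i, V_i$ identical across the two models for such $i$. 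The causal mask then forces the attention output of token $i$ to depend only on $Q_i$ and on $K_{1:i}, V_{1:i}$, all at positions $\le i < t_{\mathrm{invoke}}$ and hence already shown equal, so the attention sublayer agrees at position $i$. Because the MLP (together with any residual and normalization operations) acts token-wise and is either unadapted or adapted via the same position-gated rule, the layer output $x_i^{(\ell+1)}$ agrees for all $i < t_{\mathrm{invoke}}$, closing the induction.

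The main point to get right --- and essentially the only place the argument could go wrong --- is that the induction must be carried jointly over depth \emph{and} position: the keys and values at layer $\ell$ feed the hidden states that become inputs to layer $\ell+1$, so one cannot reason about a single attention block in isolation and must propagate the positionwise hypothesis up through the stack. What makes everything close cleanly is that causality guarantees a pre-invocation token never attends to any post-invocation token, so the adapter's modifications at positions $\ge t_{\mathrm{invoke}}$ can never flow backward to contaminate positions $< t_{\mathrm{invoke}}$. This is precisely why matching the base weights on the early tokens in \eqref{eq:weights2} is enough to match the entire internal state there.

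For the inference claim, I would observe that it now reduces to the standard autoregressive recursion of the Background applied to the aLoRA weight rule. At step $t \ge t_{\mathrm{invoke}}$ the output of token $t$ requires only the freshly computed $Q_t$ (using the adapted weights, since $t \ge t_{\mathrm{invoke}}$) together with the cached $K_{1:t}, V_{1:t}$. By the equality just established, the cached entries for positions $< t_{\mathrm{invoke}}$ may be taken verbatim from the base model's KV cache, while those for positions in $[t_{\mathrm{invoke}}, t-1]$ are exactly the adapted keys and values the aLoRA model produced on its own earlier steps; appending the newly computed $K_t, V_t$ then advances $t$ by one. Hence generation proceeds one token at a time with full KV-cache reuse, as claimed.
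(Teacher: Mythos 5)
Your proof is correct and follows essentially the same route as the paper's: an induction on layer depth showing that, because pre-invocation tokens use the unadapted weights and the causal mask prevents any dependence on positions $\ge t_{\mathrm{invoke}}$, all internal states (hence keys and values) agree at positions $< t_{\mathrm{invoke}}$, after which cache-reusing token-by-token generation is immediate. The only difference is expository: you make the per-position inductive hypothesis and the role of causality explicit, whereas the paper's proof invokes them implicitly by noting that subsequent layer outputs are deterministic functions of the $Q,K,V$ up to index $i$.
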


\begin{proof}[Proof of Proposition \ref{prop:1} (KV equivalence and aLoRA inference)]
In the proof, for notational simplicity we focus on adapters to the attention blocks, but the treatment for adapters to the MLP blocks is the same.
By construction \eqref{eq:weights2}, for every token index $i < t_{\mathrm{invoke}}$ the adapter’s weight matrices coincide with the base model’s:
\[
W^Q_{\mathrm{adapter}} \big|_{i} = W^Q,\quad
W^K_{\mathrm{adapter}} \big|_{i} = W^K,\quad
W^V_{\mathrm{adapter}} \big|_{i} = W^V.
\]
Hence for each $i < t_{\mathrm{invoke}}$ and starting with the first attention layer,
\[
Q^{\mathrm{adapter}}_i
= x_i \bigl(W^Q + 0\bigr)
= x_i W^Q
= Q^{\mathrm{base}}_i,
\]
and similarly
\[
K^{\mathrm{adapter}}_i = x_i W^K = K^{\mathrm{base}}_i,
\quad
V^{\mathrm{adapter}}_i = x_i W^V = V^{\mathrm{base}}_i.
\]
Since the transformer’s subsequent layer outputs (including all MLP and layer‐norm states) are deterministic functions of these $Q,K,V$ up to $i$, it follows by induction on layer depth that \emph{all} internal states for tokens $1,\dots,t_{\mathrm{invoke}}-1$ agree between the two models.  In particular the cached key‐ and value‐matrices satisfy
\[
K^{\mathrm{adapter}}_{1:t_{\mathrm{invoke}}-1}
= K^{\mathrm{base}}_{1:t_{\mathrm{invoke}}-1},
\quad
V^{\mathrm{adapter}}_{1:t_{\mathrm{invoke}}-1}
= V^{\mathrm{base}}_{1:t_{\mathrm{invoke}}-1}.
\]

During generation at time $t\ge t_{\mathrm{invoke}}$, both models proceed token‐by‐token via exactly the same causal‐attention mechanism \eqref{eq:att}, merely appending the new $(Q_t,K_t,V_t)$ row and reusing the previously cached rows.  Since up to $t-1$ those rows coincide, the adapter may \emph{reuse} the base model’s KV cache for the first $t_{\mathrm{invoke}}-1$ tokens, and then continue to grow its own cache for tokens $t_{\mathrm{invoke}},\dots,t-1$.  This establishes that aLoRA inference can indeed be performed causally, one token at a time, with full KV cache reuse.
\end{proof}

\begin{proposition}[Proposition \ref{prop:2} (aLoRA vs. LoRA inference costs)]
        Consider invoking an adapter with $T_{\mathrm{cache}}$ tokens of input for which a base model KV cache exists and $T_{\mathrm{new}} \ll T_{\mathrm{cache}}$ input tokens without cache.\footnote{We presume that $T_{\mathrm{new}}$ is the invocation sequence and following.} The first token generated by the aLoRA adapter requires $O(T_{\mathrm{cache}} T_{\mathrm{new}})$ operations, while LoRA requires $O((T_{\mathrm{cache}} +T_{\mathrm{new}} )^2)$. Furthermore, aLoRA must maintain only $O(T_{\mathrm{new}})$ additional KV cache memory, while LoRA requires additional $O(T_{\mathrm{cache}} + T_{\mathrm{new}})$. If $N$ distinct adapters share the first $T_{\mathrm{cache}}$ tokens as input, aLoRA costs become $O(NT_{\mathrm{cache}}T_{\mathrm{new}})$ and $O(NT_{\mathrm{new}})$, while LoRA has $O(N(T_{\mathrm{cache}} +T_{\mathrm{new}} )^2)$ and $O(N(T_{\mathrm{cache}} + T_{\mathrm{new}}))$.
\end{proposition}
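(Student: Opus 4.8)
The plan is to reduce both cost claims to a counting argument over the attention operations in a single prefill-and-generate pass, treating the number of layers $L$ and the hidden dimension $d_k$ as constants absorbed into the big-$O$. The decisive input is Proposition \ref{prop:1}: because the adapter's keys and values agree with the base model's on every token before $t_{\mathrm{invoke}}$, aLoRA may reuse the stored base-model cache for the $T_{\mathrm{cache}}$ cached tokens and need only process the $T_{\mathrm{new}}$ uncached tokens (the invocation sequence onward). By contrast, as already noted in the LoRA background, the LoRA-adapted weights $W^K+\Delta^K$ and $W^V+\Delta^V$ alter the keys and values of \emph{every} token, so no base-model cache is valid and the full length-$(T_{\mathrm{cache}}+T_{\mathrm{new}})$ input must be re-prefilled through all layers.

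First I would bound the operation count. A standard causal prefill of $n$ tokens costs $\sum_{t=1}^{n} O(t\,d_k) = O(n^2 d_k)$ attention operations, since token $t$ attends to all $t$ prior positions; the $Q,K,V$ projections and MLPs add only $O(n\,d_k^2)$, which is lower order once $n$ dominates. Applying this with $n = T_{\mathrm{cache}}+T_{\mathrm{new}}$ gives the LoRA cost $O((T_{\mathrm{cache}}+T_{\mathrm{new}})^2)$. For aLoRA, only the $T_{\mathrm{new}}$ new tokens are processed, but each still attends over the full prefix, so the cost is $\sum_{t=T_{\mathrm{cache}}+1}^{T_{\mathrm{cache}}+T_{\mathrm{new}}} O(t\,d_k) = O\bigl(T_{\mathrm{new}}(T_{\mathrm{cache}}+T_{\mathrm{new}})\,d_k\bigr)$, which under $T_{\mathrm{new}} \ll T_{\mathrm{cache}}$ collapses to $O(T_{\mathrm{cache}} T_{\mathrm{new}})$.

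Next I would handle memory analogously. LoRA cannot share the base cache, so it must store fresh keys and values for all $T_{\mathrm{cache}}+T_{\mathrm{new}}$ tokens, giving $O(T_{\mathrm{cache}}+T_{\mathrm{new}})$ additional cache; aLoRA stores only the $T_{\mathrm{new}}$ new rows on top of the already-resident base cache, giving $O(T_{\mathrm{new}})$ additional. Finally, for the $N$-adapter claim I would observe that the adapters are independent, so LoRA's costs simply multiply by $N$. For aLoRA, the key point is that the single shared base cache of the $T_{\mathrm{cache}}$ prefix tokens is stored \emph{once} and reused by all $N$ adapters (each still performing its own attention against it), so only the per-adapter new-token work and storage accumulate, yielding $O(N T_{\mathrm{cache}} T_{\mathrm{new}})$ operations and $O(N T_{\mathrm{new}})$ additional memory.

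I expect the only subtle step to be articulating precisely why LoRA forfeits \emph{all} cache reuse: one must invoke the fact that modifying the attention weights perturbs the keys and values at every layer, which in turn perturbs the residual-stream inputs to all downstream layers and tokens, so the recomputation genuinely propagates through the whole network rather than just the adapted blocks. Everything else is routine big-$O$ bookkeeping.
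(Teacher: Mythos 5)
Your proposal is correct and follows essentially the same route as the paper's proof: both count the attention operations for the $T_{\mathrm{new}}$ uncached tokens against the full $(T_{\mathrm{cache}}+T_{\mathrm{new}})$-length prefix (absorbing layers and hidden dimension into the big-$O$), invoke the KV equivalence of Proposition \ref{prop:1} for aLoRA's cache reuse, argue that LoRA's adapted weights invalidate the entire base cache so the whole context must be re-prefilled, and handle memory and the $N$-adapter case by the same bookkeeping. The ``subtle step'' you flag---that perturbed keys and values propagate through the residual stream to all downstream layers and tokens---is exactly the justification the paper gives (stated in its background section and used tersely in the proof), so there is no gap.
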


\begin{proof}[Proof of Proposition \ref{prop:2} (aLoRA vs.\ LoRA inference costs)]
Let the input be partitioned into $T_{\mathrm{cache}}$ tokens whose base‐model cache is already available, and $T_{\mathrm{new}}$ tokens on which the adapter must act without precomputed cache.  We compare the cost of generating the first new token:

\begin{itemize}
  \item \textbf{aLoRA.}  To generate token $t = T_{\mathrm{cache}}+T_{\mathrm{new}}+1$, we do a forward pass of the transformer, where we have keys and values for the first $T_{\mathrm{cache}}$ tokens (and therefore do not need to recompute these). The MLP blocks do not interact between tokens, so scale linearly only with $T_{\mathrm{new}}+1$, hidden dimension, and number of layers. For each attention layer, we
    \begin{enumerate}
      \item compute $T_{\mathrm{new}}+1$ new rows each of $Q,K,V$ in $O(T_{\mathrm{new}}d_k)$,
      \item form the attention scores by multiplying these new queries against the cached-plus-new key‐matrix of size $(T_{\mathrm{cache}} + T_{\mathrm{new}}) \times d_k$, costing $O((T_{\mathrm{cache}} +T_{\mathrm{new}})\,d_k)$,
      \item apply the $T_{\mathrm{new}} + 1$ masked softmaxes and weighted sums over $(T_{\mathrm{cache}} + T_{\mathrm{new}})$ values in $O(T_{\mathrm{new}}(T_{\mathrm{cache}} + T_{\mathrm{new}})\,d_v)$,
      \item form the adapter’s own new cache entries, yielding at most $O(T_{\mathrm{new}})$ extra storage.
    \end{enumerate}
    Hence, considering hidden dimension and number of layers as constant, the dominant cost is $O(T_{\mathrm{cache}} + T_{\mathrm{new}}(T_{\mathrm{new}} + T_{\mathrm{cache}})) = O(T_{\mathrm{new}}T_{\mathrm{cache}})$, and extra memory $O(T_{\mathrm{new}})$.

  \item \textbf{LoRA.}  Since LoRA’s rank-$r$ updates apply to \emph{all} tokens—including those in the original context—no prefill cache may be reused.  Generating the first token thus requires recomputing attention over $(T_{\mathrm{cache}} + T_{\mathrm{new}})$ tokens, incurring $O\bigl((T_{\mathrm{cache}} + T_{\mathrm{new}})^2\bigr) = O\bigl(T^2_{\mathrm{cache}} \bigr) \gg O(T_{\mathrm{new}}T_{\mathrm{cache}})$ compute, and storing $O(T_{\mathrm{cache}} + T_{\mathrm{new}})$ key/value rows.
\end{itemize}

If $N$ distinct adapters each process the same $T_{\mathrm{cache}} + T_{\mathrm{new}}$ context but disjoint $T_{\mathrm{new}}$ segments, the aLoRA per‐adapter cost scales as $O(T_{\mathrm{cache}} + N\,T_{\mathrm{new}}T_{\mathrm{cache}})$ time and $O(N\,T_{\mathrm{new}})$ memory, whereas LoRA’s cost scales on the full quadratic context for each adapter (totaling $O\bigl(N\, (T_{\mathrm{cache}} + T_{\mathrm{new}})^2\bigr)$ time and $O\bigl(N\,(T_{\mathrm{cache}} + T_{\mathrm{new}})\bigr)$ memory).  
\end{proof}

\section{Adapter Model Capacity and Increased Rank}\label{app:rank}
Figure \ref{fig:mess} illustrates this observation (described in the main text). In our experiments, rank of 32 seems to be sufficient in most cases, which is still vastly smaller than the size of the base model. Whenever comparing to LoRA models, we chose a LoRA rank that achieved top performance for LoRA, rather than attempting to match the ranks between LoRA and aLoRA.
\begin{figure}[htb]
    \centering
    \includegraphics[width=3.5in]{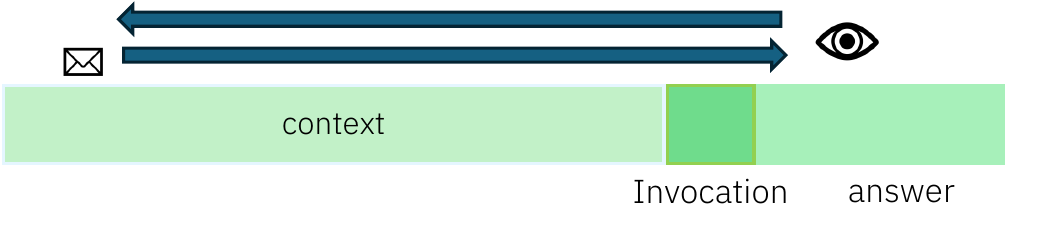}\\
    (a) LoRA is able to modify the keys (K) of task-relevant values (V) in the context attention layers for the modified queries (Q) recover via the attention mechanism.\\
    \includegraphics[width=3.5in]{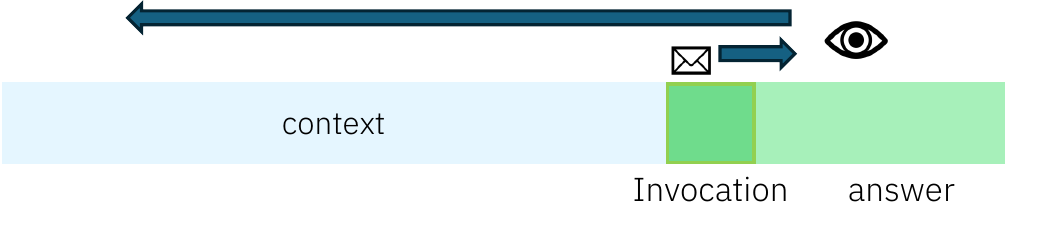}\\
    (b) For the context, aLoRA is not able to modify context KVs, it must rely on learning modified Qs alone to extract information from existing context KVs.\\
    \caption{Intuition for why aLoRA adapters often require increased rank $r$. ALoRA is not able to modify the keys and values of input tokens prior to the invocation sequence to send new ``messages'' forward to the generation step, and a too small rank constraint limits its capacity to do this. }
    \label{fig:mess}
\end{figure}

\section{Inference Timing Performance Evaluation Details}

In this section we document the  methodology followed in the experiments leading to Figure \ref{fig:synth_exp}. These experiments are  modeled, particularly for the smaller prompt lengths (10k tokens) after  multi-turn RAG situations where a collection of documents concatenated with multiple conversation turns are  followed by an answer of total length a few hundred tokens, and then where judges are used to determine various qualities of the answer.

For each model considered, 5 low rank adapters with random weights were created for ranks 8 and 32, and independent vLLM instances were launched for each model and collection of adapters in two modalities: a server where the adapters are regarded as regular LoRAs (where we used rank 8) and a server where they are regarded as activated LoRAs (where we used rank 32). We note that while in our experiments such high rank discrepancies are the exception rather than the rule, we adopted this setup so as to give an additional potential advantage to LoRA in the evaluation, as a lower rank update is in principle cheaper to compute; we also remark that we saw very little change in the experiments compared to a settign where both ranks match. For each model, the KV cache size in tokens as calculated by vLLM is used to compute the batch size to use by dividing the KV cache size by the length of the sequence that the LLM will be tested on (including initial prompt, generation and evaluation). A total of three batches are tested consecutively for each combination of model, number of evaluators and prompt length. Each batch is processed in sequential stages: answer generation for the entire batch is then followed by one or more evaluations, also done for the entire batch. 

Batches are created by choosing prompts at random while ensuring that their length, after tokenization using any given model's tokenizer, is precisely the desired length. The length of the answers (256 tokens) and evaluations (16 tokens) are enforced by passing to vLLM's generate method matching minimum and maximum number of tokens to generate.

\section{Benchmark SFT experiment details}
\label{app:tasks}
\subsection{Models}
Llama 3.2 1B Instruct (\texttt{meta-llama/Llama-3.2-1B-Instruct} on Huggingface), Llama 3.2 3B Instruct (\texttt{meta-llama/Llama-3.2-3B-Instruct}), Llama3.1 8B (\texttt{meta-llama/Llama-3.1-8B-Instruct}) \cite{grattafiori2024llama}, and Mistral 7B (\texttt{mistralai/Mistral-7B-Instruct-v0.3}) \cite{singh2023mistral}.
\subsection{Tasks and Datasets}

Here we provide additional information on the SFT tasks in Figure \ref{tab:performance}. Table \ref{tab:task-splits} gives information for each dataset on the size of the train/validation/test splits, as well as the number of multiple-choice responses for the multiple-choice tasks. Freeform tasks have unstructured natural language strings as target output. 

Table \ref{tab:task-paths} provides the Huggingface path for each dataset. The URL can be recovered as \verb+https://huggingface.co/datasets/Lots-of-LoRAs/PATH+ where \verb+PATH+ is the name indicated in Table \ref{tab:task-paths}.

We next provide the task definition prompt for each of the datasets. Note that the actual datasets include this task definition, in-context-learning (ICL) examples, and the test input as part of the full prompt to the LLM. Examples can be seen on the Huggingface pages for each dataset.

\paragraph{Bengali Hate Speech Classification}
``In this task, you are given a hateful post in Bengali that expresses hate or encourages violence towards a person or a group based on the protected characteristics such as race, religion, sex, and sexual orientation. You are expected to classify the post into four classes: Religious, Political, Geopolitical and Personal depending on the topic.''

\paragraph{WIQA: Effect Classification}
``In this task you will be given a process, and a question. The process contains a sequence of steps that happen in order. The question asks about the effect of a certain event on another event. If the first event has a positive effect on the second event, answer with "for", if it has a negative effect, answer with "against". If there's no causal relationship between the two, answer with "none".''

\paragraph{MMLU Conceptual Physics MCQA}
``You are given a question on conceptual physics. You are also given 4 answer options (associated with "A", "B", "C", "D"), out of which only one is correct. You need to answer the question by selecting the correct option. You should only answer with the choice letter, not the whole answer.''

\paragraph{MMLU College Computer Science MCQA}
``You are given a question on college computer science. You are also given 4 answer options (associated with "A", "B", "C", "D"), out of which only one is correct. You need to answer the question by selecting the correct option. You should only answer with the choice letter, not the whole answer.''

\paragraph{SocialIQA Question Generation}
``In this task, you're given context and an answer. Your task is to generate the question for this answer based on the given context with commonsense reasoning about social situations.''

\paragraph{Hindi Sentence Perturbation}
``Given a sentence in Hindi, generate a new Hindi sentence by performing small changes on the sentence. Here, make sure that the changes are semantically related and syntactically similar to the input. And the generated sentence should have high commonsense plausibility, that is to have reasonable probability of it being true.''

\paragraph{SuperGLUE Question Generation}
``In this task, you are given Wikipedia articles on a range of topics, we ask you to write a question based on the content of the articles that can be answered in a binary manner i.e. True or False.''



\begin{table}[ht]
\centering
\small
\begin{tabular}{l l l r r r}
\toprule
\textbf{Type} & \textbf{Task} & \textbf{\#Options} & \textbf{Train} & \textbf{Valid} & \textbf{Test} \\
\midrule
Multiple Choice & Bengali Hate Speech Classification        & 3 & 18.1k & 227 & 227 \\
 & WIQA: Effect Classification               & 3 & 5.2k & 650 & 650 \\ 
 & MMLU Conceptual Physics MCQA               & 4 & 142 & 18 & 18 \\
 & MMLU College Computer Science MCQA         & 4 & 90 & 12 & 11 \\
\midrule[\heavyrulewidth]
Freeform         & SocialIQA Question Generation               &  & 5.2k & 650 & 650 \\
       & Hindi Sentence Perturbation                 &  & 5.18k & 648 & 647 \\
 & SuperGLUE Question Generation               &  & 1.51k & 190 & 189 \\
\bottomrule
\end{tabular}
\caption{Dataset sizes and option counts for selected tasks.}
\label{tab:task-splits}
\end{table}

\begin{table}[ht]
\centering
\footnotesize
\begin{tabular}{l l}
\toprule
 \textbf{Task} & \textbf{Name on \texttt{Lots-of-LoRAs} (Huggingface)} \\
\midrule
Bengali Hate Speech Classification        & \texttt{task1494\_bengali\_hate\_speech\_classification} \\
  WIQA: Effect Classification               & \texttt{task1727\_wiqa\_what\_is\_the\_effect} \\ 
  MMLU Conceptual Physics               & \texttt{task693\_mmmlu\_answer\_generation\_conceptual\_physics} \\
  MMLU College Computer Science         & \texttt{task688\_mmmlu\_answer\_generation\_college\_computer\_science} \\
\midrule[\heavyrulewidth]
SocialIQA Question Generation               & \texttt{task581\_socialiqa\_question\_generation} \\
        Hindi Sentence Perturbation                 & \texttt{task407\_mickey\_hi\_sentence\_perturbation\_generation} \\
 SuperGLUE Question Generation               & \texttt{task1660\_super\_glue\_question\_generation} \\
\bottomrule
\end{tabular}
\caption{Task types and their Hugging Face dataset paths on \texttt{Lots-of-LoRAs}.}
\label{tab:task-paths}
\end{table}

\subsection{Training and evaluation details}
Following typical LoRA SFT best practices, for both LoRA and aLoRA we used 4 training epochs, alpha of 32, dropout of 0.05, adapted the K, Q, and V modules in all layers, and searched over ranks $[6, 8, 16, 32]$ and learning rates $[3\times 10^{-6}, 10^{-5}, 3\times10^{-5}, 10^{-4}, 3\times 10^{-4}]$. Batch size of 8 was used, with 16-bit arithmetic precision. Hyperparameters were selected by taking the configuration that performed best on the validation set, and reported performance was computed on the test set for those selected models. All training runs were done on single H100 GPUs. 

All models were evaluated by comparing the generated answers to golden answers provided in the dataset\footnote{Note that some datasets may contain some label noise.}, and percent correctness was used as the metric. For the multiple-choice tasks, agreement was simple to evaluate. For the freeform tasks, an LLM judge prompt compared the generated answers to the golden answers and output a binary decision.

\section{Additional details for intrinsics experiments}\label{app:int}
For the intrinsics tasks, all attention weights (keys, queries, values) were adapted in all layers, using rank 32 adapters. The learning rate and number of epochs were tuned to achieve the best validation performance (as was the case for the LoRA adapters of \cite{danilevsky2025RAGintrinsics}). Intrinsics training tasks each used an 8 GPU H100 node.  
\subsection{Uncertainty Quantification}
\textbf{Certainty score interpretation} The returned percentages are \emph{calibrated} in the following sense: given a set of answers assigned a certainty score of X\%, approximately X\% of these answers should be correct. Here ``approximately" can be quantified via the expected calibration error, or ECE. Essentially what happens is teaching the adapter model what the base model knows and doesn't know. This inherently requires generalization to questions of wildly varying difficulty (some of which may be trick questions!) and to settings not in training. Intuitively, it does this by extrapolating based on related questions it has been evaluated on in training - this is an inherently inexact process and leads to some hedging.
\textbf{Training pipeline} First, a probe-based model was trained to produce calibrated certainty scores, using a large diverse collection of QA datasets detailed in Appendix \ref{app:certDat}. Note that throughout, the chat template was used. The procedure for this was as follows:
\begin{enumerate}
    \item A ``meta-dataset" was created containing User inputs, Answer generations (from the base model), and correctness labels for those generations. 
    \item For each row in the meta-datset, the base model was prompted with input of the form (User inputs, Answer generations, meta prompt), where the meta prompt was \begin{verbatim}Is the above answer correct?\n <A> Yes, \n<B> No, \nAnswer:\end{verbatim} and one token was generated.
    \item The hidden state from the last layer of the model was saved off for the generated token. This was then combined with the correctness labels from step (1) to create a dataset of (hidden states, correctness labels).
    \item A 3 layer MLP was trained on the dataset of the previous step. This is known in the literature as a probe.
    \item The logits of the output of this MLP on held-out validation datasets were converted into probabilities, and the ECE was computed.
    \item Temperature scaling was applied here to minimize the ECE, resulting in test dataset ECE of 0.02. 
\end{enumerate}
The above follows the procedure of \cite{shen2024thermometer} for freeform responses, and was applied to both the multiple choice and freeform data for consistency.

Having a calibrated probe model, we then created a teacher dataset, where all datasets were processed by the probe model and the computed probabilities were recorded and quantized in steps of 10\% (05\% to 95\%). This teacher dataset served as the training data for the aLoRA model, which was trained to use the invocation sequence \begin{verbatim}<|start_of_role|>certainty<|end_of_role|>\end{verbatim}
and to generate the quantized percentage values. 

\subsection{Answerability determination}\label{app:answer}
The input to the model is a list of conversational turns and a list of documents converted to a string using \verb+apply_chat_template+ function. These turns can alternate between the user and assistant roles. The last turn is from the user. The list of documents is a dictionary with text field, which contains the text of the corresponding document. To prompt the aLoRA adapter to determine answerability, a special answerability role is used to trigger this capability of the model. The role includes the keyword \verb+"answerability": <|start_of_role|>answerability<|end_of_role|>+ When prompted with the above input, the model generates the answerable or unanswerable output. See \cite{danilevsky2025RAGintrinsics} for more details.

\textbf{Training Details}
The aLoRA and LoRA adapters were fine-tuned under the following regime: rank = 32, learning rate = 5e-6, number of epochs = 25, with early stopping based on validation set, and 90/10 split between training and validation.

\subsection{Query Rewrite}
\textbf{Usage} The input to the model is a list of conversational turns converted to a string using \verb+apply_chat_template+ function. These turns can alternate between the user and assistant roles, and the last turn is assumed to be from the user. To prompt the aLoRA adapter to rewrite the last user turn, a special rewrite role is used to trigger the rewrite capability of the model. The role includes the keyword ``rewrite'' followed by a short description of the query rewrite task. Even though one main application for query rewrite is in RAG settings, this intrinsic can be used to rewrite user questions for other conversational use cases (e.g., to access a database, or other APIs, or tools). As such, the adapter does not need any RAG documents (that may be present in the context, in a RAG setting) and uses only the dialog turns with what is being said between the user and assistant. 

\textbf{Training} Both the aLoRA and LoRA adapters were fine-tuned under the following regime: rank = 32, number of epochs = 25, with early stopping based on validation set, and 90/10 split between training and validation. 

\textbf{Evaluation data} We evaluate on three different subsets of MT-RAG: a) full MT-RAG dataset (842 data points with last user turns); b) the non-standalone subset of MT-RAG dataset, which is a subset of 260 (out of 842) last user turns that were annotated by humans as non-standalone (i.e., they are dependent on the prior context); c) the standalone subset of MT-RAG dataset, which is the complementary subset, with all the last user turns that were annotated by humans as standalone.

\textbf{Answer generation quality}
We also evaluate answer generation quality, with top-k passages retrieved under the various query rewrite strategies for the retriever. We choose here $k = 20$, but similar trends take place for other values of $k$. We used Granite-3.2-8b instruct as the answer generator, and RAGAS Faithfulness (RAGAS-F) and RAD-Bench score as metrics for answer quality. We use the same three testsets as above.

\subsection{Jailbreak Detection}

\textbf{Usage} The input to the model is a single prompt to assess for harmful content. Currently, the intrinsics operate on single turn queries. To prompt the aLoRA/LoRA adapters \verb+<|start_of_role|>jailbreak<|end_of_role|>+ is used. 

\textbf{Training} Both the aLoRA and LoRA adapters were fine-tuned under the following regime: rank = 32, fixed 5,000 optimization steps, 6e-5 learning rate with the Adam optimizer. 

\section{Training datasets for intrinsics}

\subsection{QA datasets for Uncertainty Quantification Intrinsic}
\label{app:certDat}
The following datasets were used for calibration and/or finetuning. 

\begin{itemize}
\item \href{https://huggingface.co/datasets/tasksource/bigbench}{BigBench}
    \item \href{https://huggingface.co/datasets/mrqa-workshop/mrqa}{MRQA}
    \item \href{https://huggingface.co/datasets/lucadiliello/newsqa}{newsqa}
    \item \href{https://huggingface.co/datasets/mandarjoshi/trivia_qa}{trivia\_qa}
    \item \href{https://huggingface.co/datasets/lucadiliello/searchqa}{search\_qa}
    \item \href{https://huggingface.co/datasets/allenai/openbookqa}{openbookqa}
    \item \href{https://huggingface.co/datasets/Stanford/web_questions}{web\_questions}
    \item \href{https://huggingface.co/datasets/alxfgh/ChEMBL_Drug_Instruction_Tuning}{smiles-qa}
    \item \href{https://huggingface.co/datasets/microsoft/orca-math-word-problems-200k}{orca-math}
    \item \href{https://huggingface.co/datasets/allenai/ai2_arc}{ARC-Easy}
    \item \href{https://huggingface.co/datasets/tau/commonsense_qa}{commonsense\_qa}
    \item \href{https://huggingface.co/datasets/allenai/social_i_qa}{social\_i\_qa}
    \item \href{https://huggingface.co/datasets/aps/super_glue}{super\_glue}
    \item \href{https://huggingface.co/datasets/nightingal3/fig-qa}{figqa}
    \item \href{https://huggingface.co/datasets/INK-USC/riddle_sense}{riddle\_sense}
    \item \href{https://huggingface.co/datasets/fancyzhx/ag_news}{ag\_news}
    \item \href{https://huggingface.co/datasets/openlifescienceai/medmcqa}{medmcqa}
    \item \href{https://huggingface.co/datasets/dataset-org/dream}{dream}
    \item \href{https://huggingface.co/datasets/jaredfern/codah}{codah}
    \item \href{https://huggingface.co/datasets/ybisk/piqa}{piqa}
\end{itemize}




\subsection{Training data for Query Rewrite Intrinsic}

The training data contains both: 1) standalone examples, which teach the adapter to refrain from rewriting user questions that are already standalone, and 2) non-standalone examples containing a diversity of patterns that are used to teach the adapter to expand the user turn so that it becomes standalone.

The training data used the publicly available Cloud corpus of technical documentation pages from MT-RAG.\footnote{\url{https://github.com/IBM/mt-rag-benchmark}} Based on this corpus of documents, a dataset was created consisting of high-quality, human-created conversations, where the last turn of the conversation comes into versions: non-standalone version, and corresponding standalone version. 


\subsection{Training data for Answerability Determination Intrinsic}
The training data uses the publicly available Government corpus from MT-RAG~\cite{katsis2025mtrag} as the source of documents. Based on this corpus, the dataset consists of a mix of human-created and synthetically generated multi-turn conversations. It includes two types of examples: (1) Answerable queries, where the final user question can be answered based on the provided documents. These examples teach the adapter to recognize when sufficient information is present to support an answer. (2) Unanswerable queries, where the documents lack the necessary information to answer the final user query. Mixtral was used as an automatic judge to validate the answerability labels and filter out noisy samples.

\subsection{Training data for Jailbreak Intrinsic}

The Jailbreak Intrinsic was trained on 40,000 harmful and benign samples. This is composed of several sub-sampled open source datasets (Gandalf Ignore Instructions \cite{Gandalf}, Awesome ChatGPT Prompts \cite{awesomechatgptprompts}, BoolQ \cite{clark2019boolq}, SAP \cite{deng2023attack}, UltraChat \cite{ding2023enhancing}, super natural instructions \cite{wang2022super}) as well as non-public prompt datasets of harmful/benign content. 

\end{document}